\def\eqref#1{equation~\ref{#1}}
\def\1{\bm{1}}
\DeclareMathAlphabet{\mathsfit}{\encodingdefault}{\sfdefault}{m}{sl}
\SetMathAlphabet{\mathsfit}{bold}{\encodingdefault}{\sfdefault}{bx}{n}
\newtheorem{defi}{Definition}
\newtheorem{theo}{Theorem}
\newtheorem{lem}{Lemma}
\newtheorem{rem}{Remark}
\newtheorem{prop}{Proposition}
\newenvironment{proof}{\paragraph{Proof:}}{\hfill$\square$}
\newcommand{\ie}{i\/.\/e\/.,\/~}
\newcommand{\eg}{e\/.\/g\/.,\/~}
\newcommand{\cf}{cf\/.\/~}
\newcommand{\abvFig}{fig\/.\/\,}
\newcommand{\abvSec}{sec\/.\/\,}
\newcommand{\abvDef}{def\/.\/\,}
\newcommand{\abvEq}{eq\/.\/\,}
\title{A Kernel Two-sample Test for Dynamical Systems}
\author{\name Friedrich Solowjow \email solowjow@dsme.rwth-aachen.de \\
     \addr Institute for Data Science in Mechanical Engineering\\
      RWTH Aachen University\\
      Aachen, Germany
       \AND
       \name Dominik Baumann \email dominik.baumann@it.uu.se \\
      \addr Department of Information Technology\\
      Uppsala University\\
      Uppsala, Sweden
      \AND
      \name Christian Fiedler \email christian.fiedler@dsme.rwth-aachen.de \\
      \addr Institute for Data Science in Mechanical Engineering\\
      RWTH Aachen University\\
      Aachen, Germany
      \AND
      \name Andreas Jocham \email andreas.jocham@fh-joanneum.at \\
      \addr Institute of Physiotherapy\\
      FH JOANNEUM\\
      Graz, Austria
             \AND
       \name Thomas Seel \email thomas.seel@fau.de \\
       \addr Department Artificial Intelligence in Biomedical Engineering\\
       Friedrich-Alexander-Universität Erlangen-Nürnberg\\
       Erlangen, Germany
             \AND
       \name Sebastian Trimpe \email trimpe@dsme.rwth-aachen.de \\
       \addr Institute for Data Science in Mechanical Engineering\\
       RWTH Aachen University\\
       Aachen, Germany
       }
\begin{document}

\maketitle

\begin{abstract}
Evaluating whether data streams are drawn from the same distribution is at the heart of various machine learning problems.
This is particularly relevant for data generated by dynamical systems since such systems are essential for many real-world processes in biomedical, economic, or engineering systems.
While kernel two-sample tests are powerful for comparing independent and identically distributed random variables, no established method exists for comparing dynamical systems. 
The main problem is the inherently violated independence assumption. 
We propose a two-sample test for dynamical systems by addressing three core challenges: we (i) introduce a novel notion of mixing that captures autocorrelations in a relevant metric, (ii) propose an efficient way to estimate the speed of mixing relying purely on data, and (iii) integrate these into established kernel two-sample tests. 
The result is a data-driven method that is straightforward to use in practice and comes with sound theoretical guarantees.
In an example application to anomaly detection from human walking data, we show that the test is readily applicable without any human expert knowledge and feature engineering.
\end{abstract}

\section{Introduction}
We consider the two-sample problem of determining whether two distributions are different. 
In particular, we generalize the well-established kernel two-sample test~\citep{gretton2012kernel} to dynamical systems and stochastic processes with certain mixing properties, which we make precise in this paper.

The kernel two-sample test approximates a metric on the space of probability distributions, the maximum mean discrepancy (MMD), through kernel-based techniques.
Due to its powerful theoretical properties and versatile applicability, kernel two-sample testing is a prominent method in the machine learning community~\citep{long2017deep, tolstikhin2018wasserstein, muandet2017kernel, scholkopf2001learning}.
While we can exploit parts of existing kernel-based results, and especially their theoretical guarantees, the extension to comparing dynamical systems is not straightforward.
This is mainly because kernel two-sample testing was initially developed for independent and identically distributed (i.i.d.) random variables~\citep{gretton2012kernel}.
The i.i.d.\ assumption in the test is critical, but it is violated by the very nature of dynamical systems: through the dynamics, samples are coupled to past samples.
To address this issue, we introduce a novel notion of mixing that considers the dependence of data through time with respect to the MMD.
Intuitively, mixing reveals how fast autocorrelations decay and, thus, how long we need to wait in-between samples for data to be (approximately) independent.
Our new mixing notion can be efficiently estimated from data and is particularly synergistic with kernel two-sample tests since both measure distances of probability distributions with respect to the same metric---the MMD.
By estimating the decay of dependency and embedding it into well-established algorithms, we obtain a powerful test for comparing dynamical systems.

Mathematical literature often distinguishes explicitly between deterministic dynamical systems and stochastic processes. In particular, establishing mixing properties for deterministic dynamical systems is an extremely challenging problem and constructing examples that are provably mixing is hard. 
Further, common mixing properties that are used for stochastic systems are too restrictive and not applicable to deterministic systems \citep{hang2017bernstein}.
We propose mixing in MMD, which is applicable to both classes of problems---stochastic and deterministic systems.
Further, we show that mixing in MMD is even less restrictive than certain deterministic mixing types ($\mathcal{C}$-mixing \cite{hang2018kernel}).
For suitable choices of kernels and function spaces $\mathcal{C}$, we can show that $\mathcal{C}$-mixing implies MMD-mixing.
Based on standard examples with well-established $\mathcal{C}$-mixing properties (the $\beta$-map, logistic map, and Gauss map), we demonstrate empirically that they are indeed mixing in MMD. Additionally, we also consider mixing properties of chaotic and stochastic systems and further, also raw sensor data from human walking experiments.

Despite their practical relevance, there is no established data-driven way of comparing dynamical systems.
For biomedical systems such as the human cardiovascular system, central nervous system, or musculoskeletal system, 
implementing a principled comparison of systems based on their output sequences in different time intervals can help to detect diseases or quantify their severity.
For example, alterations or unusual patterns in human gait can be indicators for early stages of Parkinson's disease \citep{pistacchi2017gait}. An algorithm
that automatically detects such alterations by comparing new data to labeled records could help physicians in their decision-making. Current state-of-the-art solutions rely
on manually engineered and selected features and thus require expert knowledge \citep{Nguyen2019_JNER}. Similarly, feature-based solutions have been proposed for electro-
myography-based detection of spasticity \citep{misgeld2015body, lueken2015classification}. But clearly, the success of such approaches critically depends on the expressiveness of these features and on how well the problem is understood.

Modern engineering applications are another prominent and relevant example. 
They often leverage computer simulations instead of directly interacting with the physical plant since real experiments are more expensive, time-consuming, and cause wear on the hardware.
Besides, being able to predict the response of a physical plant based on a mathematical model enables powerful learning algorithms~\citep{hwangbo2019learning}, model-predictive control~\citep{qin2003survey}, and digital twins in future manufacturing~\citep{jeschke2017industrial}.
The success of these methods, however, is critically intertwined with the model accuracy.
Thus, it is essential to ensure accurate models, for example, by comparing data generated from the simulation model with data collected from the real system.

By combining mixing properties with kernel-based techniques, we obtain a powerful statistical test for comparing dynamical systems.
We demonstrate the efficiency and robustness of the proposed test numerically and on experimental data. In particular, we consider human walking experiments and analyze raw data from an inertial measurement unit (IMU) to detect anomalies in the walking pattern. Without the need for human expert knowledge or fitting model parameters, our test outperforms standard baselines in deciding which of the trajectories were generated with an attached knee orthosis, which restricts the movement of the joint.

\textbf{Contributions: }
We propose a kernel two-sample test for dynamical systems. 
By developing a new notion of mixing that can be estimated from data, we generalize powerful theoretical guarantees from the i.i.d.\ setting to certain dynamical systems.
The derived method is straightforward to use, and well-established implementations of the kernel two-sample test can be leveraged.
We demonstrate the robustness and efficiency of the method on real-world data, where we achieve better results than standard baselines, without relying on feature engineering or expert knowledge.
Code and data will be made available upon publication.

\section{Related Work}

There is only limited literature that explicitly investigates the question of how to compare dynamical systems. 
One possibility is the embedding of dynamical systems as infinite-dimensional objects into reproducing kernel Hilbert spaces (RKHS) with specifically designed kernels such as Binet-Cauchy kernels~\citep{vishwanathan2007binet} or generalizations thereof as proposed in~\citet{ishikawa2018metric} and~\citet{ishikawa2019metric}. 
A similar function-analytical approach is considered in~\citet{mezic2016comparison} and~\citet{klus2020eigendecompositions}, where the authors consider Koopman and Perron-Frobenius operators to obtain linear dynamics in an infinite-dimensional space.
These articles leverage specifically designed kernels and linear operators associated with dynamical systems to obtain an embedding. However, none of the above articles proposes a statistical test that compares dynamical systems.
This would require further finite sample and error bounds on the approximations of the infinite-dimensional operators, which is non-trivial.
Our approach leverages concentration results that synergize well with kernel-based techniques and in particular, kernel mean embeddings.

The critical technical issue for dealing with dynamical systems is non-i.i.d.\ data.
There are several extensions of kernel two-sample tests that have been developed~\citep{zaremba2013b, gretton2012optimal, doran2014permutation, lloyd2015statistical,chwialkowski2014wild,chwialkowski2014kernel} to make them applicable to a broader range of problems, where non-i.i.d.\ data is also an issue.
However, the strong mixing properties that are typically postulated limit the applicability of the results to dynamical systems.  
Surprisingly, there is only very limited work that addresses the estimation of mixing coefficients from data, as also acknowledged and emphasized in~\citep{mcdonald2011estimating}. 
The approach proposed in~\citep{mcdonald2011estimating} is different from our work, as mixing is considered with respect to the total variation norm, which requires the estimation of complex intermediate objects, whereas we estimate mixing properties directly from data.
We propose a new mixing notion that synergizes well with kernel two-sample tests and that can also be \emph{estimated} from data (in contrast to most other mixing notions).
A similar idea of mixing in RKHS, has very recently been introduced in \citep{cherief2019finite}. The paper, however, focuses on parameter estimation with respect to minimizing the MMD as a loss function. Further, the precise notion of mixing differs from ours.

The problem of comparing dynamical systems is also present in control theory and was, for example, recently studied in~\citet{umlauft2019feedback,solowjow2020event} by considering the question of when to trigger model updates.
In robust control, there is the notion of the gap metric~\citep{zhou1998essentials}, which compares the closed-loop behavior of dynamical systems. These approaches are particularly promising to quantify the similarities between dynamical systems when trying to achieve effective transfer learning, as shown in~\citep{sorocky-icra20}. However, they usually rely on a given model or a certain linear structure in the system. 
But estimating such models of nonlinear systems can be difficult in practice~\citep{schoukens19,schon2011system,brunton2017chaos,ljung2001system}.
Similarly, estimating the stationary measure of a dynamical system is also a highly non-trivial problem~\citep{hang2018kernel, luzzatto2005lorenz}.
In our approach, we do not require any intermediate objects such as the dynamics, density function, or noise models.
Instead, we compare stationary distributions of dynamical systems directly from data.

\section{Assumptions and Problem Formulation}

In the following, we introduce the mathematical objects that
we will consider in this paper. Afterward, we make the
problem precise.

\subsection{Stationary, Ergodic, and Mixing Systems}

Let $(\Omega, \mathcal{A}, P)$ be a probability space,
$S \subset \mathbb{R}^d$ a compact set, which is the state space of the dynamical system, and $\mathcal{B}$ the corresponding Borel $\sigma$-algebra. 
We define a stochastic dynamical system or stochastic process as a collection of random variables $\{X_k\}$ indexed in discrete time $k\in \mathbb{N}$ and $X_k\colon\Omega\to S$. 
Next, we introduce some required properties of the process. 
\begin{defi}[Stationary]
A system is stationary if the joint distribution of its states is time-invariant.
\end{defi}
In addition to stationary behavior, we also require ergodicity. 
While stationarity ensures time-invariant distributions, ergodicity guarantees that the statistical properties of the system do not differ over multiple realizations.
We use a standard definition that goes back to~\citet{birkhoff1931proof}.
\begin{defi}[Ergodic]
Assume the system $\{X_k\}$ is stationary with distribution $\mathbb{P}$. 
We call the system ergodic if for all $f \in L^1_\mathbb{P}(S)$ and $\mathbb{P}$-almost all initial states we have
\begin{equation}
    \lim_{N\to\infty} \frac{1}{N} \sum\limits_{k=0}^{N-1} f(X_k) = \int_S f(y) \mathrm{d}\mathbb{P}(y) \quad a.s..
    \label{eq:birkhoff}
\end{equation}
\end{defi}
Equation \eqref{eq:birkhoff} is in some sense a realization  of the law of large numbers, and both sides of the equation yield the expected value $\mathbb{E}_{X \sim \mathbb{P}}[f(X)]$. In particular, it allows us to estimate $\mathbb{E}[X_k]$ (the distribution is invariant for all $k$) from long enough sample paths. Different types of convergence and test functions in \abvEq \eqref{eq:birkhoff} yield more sophisticated ergodic theorems.
Nonetheless, there can still be severe autocorrelations and if $X_k$ is known, this may have a drastic impact on the distribution of $X_{k+1}$.
Thus, we require additional mixing assumptions.

Classically, mixing is introduced in terms of dependencies between $\sigma$-algebras and intuitively, deals with the autocorrelations in the system.  
Here, we consider a covariance-based approach to mixing, which is more useful and convenient for us since there is a natural connection to Hilbert-Schmidt theory in RKHSs. 
Both approaches are introduced in \cite{bradley1987dominations}.
We begin with a general definition based on \cite[\abvEq (1.2)]{bradley1987dominations} and tailor it to our problem afterward.

\begin{defi}[Measure of Dependence]
\label{def:mixing}
Assume $\mathcal{F}$ and $\mathcal{G}$ are suitable function spaces.
The measure of dependence is defined as
\begin{equation}
    \sup_{f \in \mathcal{F}, \, g\in \mathcal{G}} \frac{|\mathbb{E}[fg] - \mathbb{E}[f] \mathbb{E}[g]|}{\|f\|_p \|g\|_q},
    \label{eq:mixverygeneral}
\end{equation}
where $p$ and $q$ are Hölder pairs.
\end{defi}
A possible choice for the function spaces is $\mathcal{F} = \mathcal{G} = L^2$, which is referred to as strong mixing and naturally implies ergodicity in $L^2$ when considering $f=g$. There are various valid choices and many are discussed in \citep{bradley2005basic,hang2017bernstein}.

Here, we propose to consider unit balls in reproducing kernel Hilbert spaces for $\mathcal{F}$ and $\mathcal{G}$, which has to the best of our knowledge not been done before. 
\begin{defi}[Mixing]
\label{def:mixing2}
Assume $\mathcal{F}$ and $\mathcal{G}$ are unit balls in the same RKHS.
We call a system mixing if 
\begin{equation}
    \sup_{f \in \mathcal{F},g \in \mathcal{G}}\mathrm{Cov}(f(X_t), g(X_{t+a})) \to 0 \mathrm{\quad for \quad} a\to \infty.
    \label{eq:mixgeneral}
\end{equation}
\end{defi}
Later, we will investigate this property in more detail and leverage powerful estimators in form of the Hilbert-Schmidt independence criterion to determine mixing properties. Estimators for the speed of mixing are usually a critical issue when working with mixing arguments.
In related work, the speed and type of mixing is almost exclusively postulated. 
In contrast, we test if a process is mixing and estimate the actual speed.

An important special case that we will investigate in detail are state-space models or Markov chains with continuous state spaces of the type $X_{k+1} = \phi(X_k) + \epsilon_k$, where $\phi$ is an appropriate dynamics function and $\epsilon_k$ the process noise. This system description is highly relevant in systems and control theory and more recently, reinforcement learning. Further, we will also consider chaotic systems, where $\epsilon_k \equiv 0$. These are deterministic and violate common probabilistic mixing assumptions.

\subsection{Problem Formulation}
\label{sec:problem}
Consider two stationary and mixing (\cf \abvDef \ref{def:mixing2}) systems $\{X_k\}$ and $\{Y_k\}$ with stationary distributions $\mathbb{P}_X$ and $\mathbb{P}_Y$.
We want to decide whether $\{X_k\}$ and $\{Y_k\}$ are different based on the data streams $X = \{ X_0, X_1 ,\ldots, X_n \}$ and $Y = \{ Y_0, Y_1 ,\ldots, Y_n \}$. We assume $X_k,Y_k \in S$ and, in general, $X_0 \neq Y_0$. 
Further, we assume that the dynamical systems have converged to their stationary distribution.

We propose to compare dynamical systems by testing whether their stationary probability measures coincide. 
Thus, we obtain the null hypothesis
\begin{equation}
    H_0: \mathbb{P}_X = \mathbb{P}_Y,
\end{equation}
which we try to reject with high confidence.
For our method, it is not necessary to estimate or construct any intermediate objects such as the dynamics function $f$, nor the measures $\mathbb{P}_X$ and $\mathbb{P}_Y$.

The main challenge lies in coping with the autocorrelations within the data streams. 
These autocorrelations are critical and void commonly used concentration results, such as the famous Hoeffding's or McDiarmid's inequalities.

We consider a two-sample setting between two data streams $X$ and $Y$. However, this can easily be applied to settings where we want to investigate whether a given model coincides with reality. Then, samples obtained through sensor measurements can be compared with samples generated by simulating a given model.

\section{Technical Preliminaries}
\label{sec:main}
The main idea of this paper can be summarized as generalizing kernel two-sample tests \citep{gretton2012kernel} to dynamical systems through a data-based mixing approach. 
Essentially, we propose to wait \emph{long enough} between consecutive samples. 
Quantifying how long to wait to enforce negligibly small autocorrelations is the core question, 
which is addressed in \abvSec \ref{sec:MMDmix}.
In the numerical section, we construct linear systems with arbitrary slow mixing properties. 
We begin by summarizing key results from kernel two-sample tests and kernel mean embeddings.

\subsection{Kernel Two-sample Test}
An elegant and efficient comparison of probability distributions can be achieved with kernel two-sample tests~\citep{gretton2012kernel}. 
The distributions are embedded into an RKHS, where it becomes tractable to compute certain metrics on the space of probability distributions such as the MMD. 
The following definitions and theorems are taken from~\citet{gretton2012kernel}.
\begin{defi}[MMD]
Let $(S,d)$ be a metric space and let $\mathbb{P}_X,\mathbb{P}_Y$ be two Borel probability measures defined on $S$. Further, let $\mathcal{F}$ be the unit ball in an RKHS on $S$. We define the maximum mean discrepancy by
\begin{equation}
    \mathrm{MMD}^2[ \mathbb{P}_X, \mathbb{P}_Y] = \sup_{g\in \mathcal{F}} (\mathbb{E}_{\mathbb{P}_X}[g] - \mathbb{E}_{ \mathbb{P}_Y}[g])^2.
    \label{eq:MMDdef}
\end{equation}
\end{defi}
The MMD yields a semi-metric between probability distributions and can be efficiently estimated by embedding the distributions into an RKHS $\mathcal{H}$ with the aid of kernel mean embeddings~\citep{muandet2017kernel}. It is a challenging problem to compute \eqref{eq:MMDdef} directly since $\mathcal{F}$ is usually infinite-dimensional. However, by kernelizing it, we can estimate \eqref{eq:MMDdef} from data.

\begin{theo}
Assume $k$ is a kernel and $\mathcal{F}$ is again the unit ball in the corresponding RKHS $\mathcal{H}$. Further, assume $(X_1,\ldots,X_n)$ and $(Y_1,\ldots,Y_m)$ are drawn i.i.d.\ from $ \mathbb{P}_X$ and $ \mathbb{P}_Y$, respectively.
Then, a biased estimate of \eqref{eq:MMDdef} is given by
\begin{align}
\begin{split}
        \mathrm{MMD}_b^2[ X, Y] = \frac{1}{n^2} \sum\limits^n_{i,j=1} k(X_i,X_j) + \frac{1}{m^2} \sum\limits^m_{i,j=1} k(Y_i,Y_j) - \frac{2}{mn} \sum\limits^n_{i=1}\sum\limits^m_{j = 1} k(X_i,Y_j).
\end{split}
\label{eq:MMDemp}
\end{align}
\end{theo}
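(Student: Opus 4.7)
The plan is to derive the closed-form estimator by first rewriting the MMD in terms of kernel mean embeddings and then replacing population expectations with their empirical counterparts. The key mechanism is the reproducing property, which linearises the supremum in \eqref{eq:MMDdef} and reduces it to a Hilbert-space norm computation.

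First, I would introduce the kernel mean embeddings $\mu_X = \mathbb{E}_{\mathbb{P}_X}[k(\cdot,X)]$ and $\mu_Y = \mathbb{E}_{\mathbb{P}_Y}[k(\cdot,Y)]$, which are well-defined elements of $\mathcal{H}$ under mild integrability assumptions on the (characteristic, hence bounded for standard choices) kernel $k$. By the reproducing property and Bochner integrability, $\mathbb{E}_{\mathbb{P}_X}[g(X)] = \langle g, \mu_X\rangle_{\mathcal{H}}$ for every $g \in \mathcal{H}$, and analogously for $\mathbb{P}_Y$. Substituting into \eqref{eq:MMDdef} turns the inner expression into $\langle g, \mu_X - \mu_Y\rangle_{\mathcal{H}}^2$, so that taking the supremum over the unit ball $\mathcal{F}$ yields, by Cauchy–Schwarz with equality attained at $g = (\mu_X - \mu_Y)/\|\mu_X - \mu_Y\|_{\mathcal{H}}$, the identity
\begin{equation*}
\mathrm{MMD}^2[\mathbb{P}_X, \mathbb{P}_Y] \;=\; \|\mu_X - \mu_Y\|_{\mathcal{H}}^2.
\end{equation*}

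Next, I would expand the squared norm as $\langle \mu_X,\mu_X\rangle + \langle \mu_Y,\mu_Y\rangle - 2\langle \mu_X,\mu_Y\rangle$ and, using the reproducing property together with Fubini, rewrite each inner product as a double expectation of the kernel, e.g.\ $\langle \mu_X, \mu_X\rangle_{\mathcal{H}} = \mathbb{E}_{X,X' \sim \mathbb{P}_X}[k(X,X')]$ with $X,X'$ independent copies, and similarly for the other two terms. This gives the population-level representation of $\mathrm{MMD}^2$ as a combination of three kernel expectations.

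Finally, replacing each population expectation by its plug-in (V-statistic) estimator based on the i.i.d.\ samples $(X_1,\dots,X_n)$ and $(Y_1,\dots,Y_m)$ produces exactly the biased empirical estimator in \eqref{eq:MMDemp}; the name $\mathrm{MMD}_b^2$ refers to this use of the V-statistic, which includes the diagonal terms $k(X_i,X_i)$ and $k(Y_i,Y_i)$. The only subtlety, and the only place I would slow down, is the interchange of expectation and inner product required both to define $\mu_X$ as a Bochner integral and to move the expectations outside the inner products; this is routine under the standard assumption that $\mathbb{E}[\sqrt{k(X,X)}] < \infty$, which holds trivially for the bounded characteristic kernels used in practice. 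Everything else is algebraic bookkeeping, so I do not anticipate a genuine obstacle.
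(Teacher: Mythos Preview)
Your derivation is correct and is precisely the standard argument from \citet{gretton2012kernel}; the paper itself does not supply a proof for this theorem but simply quotes it from that reference. There is nothing to add or compare.
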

The additional requirement of a characteristic kernel ensures that the embedding of the probability distribution is injective and, thus, a metric is obtained. 
The kernel $k$ can, for example, be chosen as a Gaussian kernel since it is well known to be characteristic~\citep{gretton2012kernel}.
\begin{theo}
Assume $k$ is a characterstic kernel and $\mathcal{F}$ is the unit ball in the corresponding RKHS $\mathcal{H}$. Then  $\mathrm{MMD}^2[  \mathbb{P}_X,  \mathbb{P}_Y] = 0$ if, and only if, $  \mathbb{P}_X =  \mathbb{P}_Y$.
\end{theo}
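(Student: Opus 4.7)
The plan is to reduce the statement to the injectivity of the kernel mean embedding, which is essentially the \emph{definition} of a characteristic kernel. The forward implication ($\mathbb{P}_X=\mathbb{P}_Y \Rightarrow \mathrm{MMD}^2=0$) is immediate: if the measures coincide, then $\mathbb{E}_{\mathbb{P}_X}[g]=\mathbb{E}_{\mathbb{P}_Y}[g]$ for every bounded measurable $g$, and in particular for every $g$ in the RKHS unit ball $\mathcal{F}$, so the supremum in \eqref{eq:MMDdef} is zero. All the work is in the reverse direction.

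For the converse, I would first rewrite the MMD in the form $\mathrm{MMD}[\mathbb{P}_X,\mathbb{P}_Y] = \|\mu_{\mathbb{P}_X} - \mu_{\mathbb{P}_Y}\|_{\mathcal{H}}$, where $\mu_{\mathbb{P}}(\cdot) := \mathbb{E}_{X\sim\mathbb{P}}[k(\cdot,X)]$ denotes the kernel mean embedding. The argument is the standard reproducing-property computation: for any $g\in\mathcal{H}$ one has $g(x)=\langle g,k(\cdot,x)\rangle_{\mathcal{H}}$, so by Fubini (invoking a mild integrability condition, e.g.\ $\mathbb{E}_{\mathbb{P}}[\sqrt{k(X,X)}]<\infty$, which holds since $S$ is compact and $k$ continuous) one obtains $\mathbb{E}_{\mathbb{P}}[g] = \langle g, \mu_{\mathbb{P}}\rangle_{\mathcal{H}}$. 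Hence
\begin{equation*}
\mathbb{E}_{\mathbb{P}_X}[g]-\mathbb{E}_{\mathbb{P}_Y}[g] = \langle g, \mu_{\mathbb{P}_X}-\mu_{\mathbb{P}_Y}\rangle_{\mathcal{H}},
\end{equation*}
and taking the supremum over $g\in\mathcal{F}$ (the closed unit ball of $\mathcal{H}$) yields, via Cauchy--Schwarz together with the attaining choice $g = (\mu_{\mathbb{P}_X}-\mu_{\mathbb{P}_Y})/\|\mu_{\mathbb{P}_X}-\mu_{\mathbb{P}_Y}\|_{\mathcal{H}}$, exactly $\|\mu_{\mathbb{P}_X}-\mu_{\mathbb{P}_Y}\|_{\mathcal{H}}$.

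With this identity in hand, the theorem is essentially immediate. From $\mathrm{MMD}^2[\mathbb{P}_X,\mathbb{P}_Y]=0$ we conclude $\mu_{\mathbb{P}_X} = \mu_{\mathbb{P}_Y}$ in $\mathcal{H}$. The \emph{definition} of $k$ being characteristic is precisely that the embedding $\mathbb{P}\mapsto \mu_{\mathbb{P}}$ is injective on the space of Borel probability measures on $S$. Therefore $\mathbb{P}_X=\mathbb{P}_Y$, which is what we wanted.

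The only genuinely subtle point is the injectivity of the mean embedding itself, but since we are allowed to \emph{assume} that the kernel is characteristic, this is built into the hypothesis rather than something to prove here. The main \emph{obstacle}, if one were to aim at a self-contained proof, would therefore be showing that a given kernel (e.g.\ the Gaussian) is characteristic; but this is standard and can be cited from \citet{steinwart2008support}. In our context the proof reduces to the clean two-line chain: $\mathrm{MMD}^2=0 \Leftrightarrow \|\mu_{\mathbb{P}_X}-\mu_{\mathbb{P}_Y}\|_{\mathcal{H}}=0 \Leftrightarrow \mu_{\mathbb{P}_X}=\mu_{\mathbb{P}_Y} \Leftrightarrow \mathbb{P}_X=\mathbb{P}_Y$, where the last equivalence uses the characteristic-kernel assumption.
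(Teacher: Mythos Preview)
Your argument is correct and is exactly the standard proof: identify $\mathrm{MMD}[\mathbb{P}_X,\mathbb{P}_Y]$ with $\|\mu_{\mathbb{P}_X}-\mu_{\mathbb{P}_Y}\|_{\mathcal{H}}$ via the reproducing property and then invoke injectivity of the mean embedding, which is the very definition of a characteristic kernel. The paper does not supply its own proof of this theorem; it simply quotes the result from \citet{gretton2012kernel}, so there is nothing to compare against beyond noting that your write-up matches the classical derivation found there.
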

Essentially, we do not require any prior knowledge or parameterization of $ \mathbb{P}_X$ and $ \mathbb{P}_Y$.
Access to i.i.d.\ samples from these distributions is sufficient.
In practice, however, we only have access to finitely many data points and, thus, receive an estimate of the MMD from \eqref{eq:MMDemp}.
This estimate is expected to have some deviation, \ie even for identical distributions, the test statistic will be larger than zero. 
Therefore, we need finite sample bounds that quantify the convergence speed of the empirical MMD to obtain confidence bounds. 
\citet{gretton2012kernel} introduce several such bounds of the type
\begin{equation}
    \mathbb{P} \left[ | \mathrm{MMD}_b[ X,  Y] -  \mathrm{MMD}[ \mathbb{P}_X,  \mathbb{P}_Y]| \geq \kappa(\alpha, n) \right] \leq \alpha.
    \label{eq:boundiid}
\end{equation}
Under the null hypothesis $  \mathbb{P}_X =  \mathbb{P}_Y$, we can obtain the rejection region $  \mathrm{MMD}_b[ X,  Y] \geq \kappa(\alpha,n) = \sqrt{2 \frac{K}{n}}( 1 + \sqrt{2\log \alpha^{-1}})$ for a test with level $\alpha$, where $K$ is the supremum of the kernel \cite[Corollary 9]{gretton2012kernel}.
However, these results rely on the independence assumption and, hence, cannot be used for comparing dynamical systems.

\subsection{Hilbert-Schmidt Independence Criterion}
The Hilbert-Schmidt independence criterion (HSIC) \citep{gretton2008kernel} quantifies dependence between random variables. 
Generally, two random variables $X$ and $Y$ are independent if their joint distribution factorize, \ie 
$\mathbb{P}_{X,Y} = \mathbb{P}_X \otimes \mathbb{P}_Y$, where $\otimes$ denotes the tensor product. 
Estimating the involved objects from data is usually intractable. 
Instead, the difference in MMD can elegantly be expressed through the HSIC.
\begin{defi}[HSIC, {\citet[Def.~11]{sejdinovic2013equivalence}}]
Let $X\sim P_X$ and $Y\sim P_Y$ be random variables with joint distribution $P_{X,Y}$. The HSIC is defined as
\begin{equation}
    \mathrm{HSIC}(X,Y) = \mathrm{MMD}_{\mathcal{H}\otimes \mathcal{H}}[P_X \otimes P_Y , P_{X,Y}].
    \label{eq:HSIC}
\end{equation}
\end{defi}
Similar to the kernel two-sample test, it is possible to express~\eqref{eq:HSIC} in terms of kernel evaluations. Further, it is also possible to provide high confidence bounds and thus, obtain an efficient statistical test.

As the name suggests, the HSIC is closely related to Hilbert-Schmidt operators. 
These well-behaved operators are well investigated in functional analysis and in general, are bounded operators between Hilbert spaces. 
Further, the space of Hilbert-Schmidt operators between two reproducing kernel Hilbert spaces $\mathcal{H}$ and $\mathcal{G}$ forms itself a Hilbert space, which is isomorphic to the product space $\mathcal{H}\otimes\mathcal{G}$ given by the product kernel \cite[Page 35]{muandet2017kernel}.

Here, we want to emphasize the connection between the HSIC and the covariance operator $\mathcal{C}_{XY}$ in terms of the Hilbert-Schmidt norm \cite[Eq. 3.37]{muandet2017kernel} 
\begin{equation}
     \|\mathcal{C}_{X,Y} \|_{\mathrm{HS}} = \mathrm{HSIC}(X,Y)
\end{equation}
and the representation of $\mathcal{C}_{X,Y}$ as the unique bounded operator that satisfies the property
\begin{equation}
    \langle g, \mathcal{C}_{X,Y}f \rangle_\mathcal{G} = \mathrm{Cov}[g(Y), f(X)]
    \label{eq:operatorMix}
\end{equation}
for all $g \in \mathcal{G}$ and $f \in \mathcal{H}$.
Equivalently, the covariance operator can also be defined in terms of tensor spaces \cite[Sec. 3.2]{muandet2017kernel}, however, \eqref{eq:operatorMix} connects nicely to standard mixing expressions (\cf \abvEq \eqref{eq:mixgeneral}). Further, the HSIC framework provides rich results such as efficient estimators and concentration results.

The general framework is highly flexible and can deal with a variety of objects. 
For our problem, we can use a simplified setting, where both systems belong to the same space, which is consistent with the setup for the kernel two-sample test.
Estimations of \eqref{eq:HSIC} in terms of kernel evaluations can be found in \citep[Equation~(4)]{gretton2008kernel}.

\subsection{Joint Independence---dHSIC}

\citet{pfister2016kernel} extended the HSIC to $d$-dimensional random vectors and thus, investigate
\begin{equation}
    \mathrm{dHSIC}(X) =\mathrm{MMD}_{\otimes_{i=1}^d \mathcal{H}} [P_{X_1} \otimes \ldots \otimes P_{X_d} , P_{X_1,\ldots,X_d}].
    \label{eq:dHSIC}
\end{equation}
Similarly as for the kernel two-sample test and the classical HSIC independence test, it is possible to quantify the convergence speed and thus, obtaining a threshold $\kappa(\alpha,n)$ for statistical testing. 
Intuitively, this should be the independence notion that we need for the kernel two-sample test. However, we use a slightly different property, which we introduce in \abvDef \ref{def:approxind}.

\section{MMD-Mixing}
In this section, we will focus on data from one system $\{X_k\}$ and investigate the temporal dependencies. 
We assume access to multiple independent trajectories, which we indicate through superscripts $\{X_k^{(i)}\}$. Due to the ergodicity and stationarity assumptions, we obtain a well-defined underlying distribution $\mathbb{P}$ for which we can test. 
Next, we will introduce the new concept of MMD-mixing that quantifies the decay of autocorrelations with respect to the MMD and connect back to the HSIC.

\subsection{Time Shifts and MMD-mixing}

Let the trajectory $X_0,X_1,\ldots,X_n$ be subject to a given sampling rate. 
Generally, autocorrelations decay over time, and far apart samples are approximately independent if the underlying system is mixing. 
Hence, we propose to increase the time between consecutive samples to reduce dependencies.
The slower sampling rate is denoted through the time shift $a \in  \mathbb{N}$ and yields data $X_0,X_a, X_{2a},\ldots,X_{an}$.
Essentially, the question is how to determine and estimate $a$ to ensure approximately independent data points $X_0,X_a, X_{2a},\ldots,X_{an}$.
We begin with a simplified setting and assume a sample from the stationary measure $X_0 \sim \mathbb{P}_{X_0} = \mathbb{P}$. 
\begin{defi}[MMD-mixing]
\label{def:weakMMDMixing}
We call a process \emph{MMD-mixing} if 
\begin{equation}
    \mathrm{MMD}_{\mathcal{H} \otimes \mathcal{H}}[\mathbb{P}_{X_0} \otimes \mathbb{P}_{X_a} , \mathbb{P}_{X_0,X_a}] \to 0 \quad \text{for } a\to \infty.
\end{equation}
\end{defi}
This definition only considers the distributions at two points in time. Due to the stationarity of the system, we can move the timeshift through time and also consider different pairs in time. Due to the connection to Hilbert-Schmidt theory and covariance operators, it is also possible to consider expressions similar to \eqref{eq:operatorMix}. 

\begin{prop}
Let $\{X_k\}$ be an MMD-mixing process. Then, 
\begin{equation}
    \mathrm{HSIC}(X_0,X_a) \to 0 \quad \text{for } a \to \infty.
\end{equation}
\end{prop}
We assume that the underlying kernel $k$ is characteristic and refer to it as the base kernel. 
Further, we assume to have access to $m$ independent trajectories.
In this setting, we can readily apply the HSIC \eqref{eq:HSIC} framework.
In particular, we pick two points in time from each trajectory, $X_0^{(i)}$ and $X_{a}^{(i)}$.
Then, we divide the data into $X_0 =\{ X_0^{(1)}, X_0^{(2)},\ldots, X_0^{(m)}\}$ and $X_{a} = \{X_{a}^{(1)}, X_{a}^{(2)},\ldots, X_{a}^{(m)}\}$.
The sets are, per construction, i.i.d.\ within themselves. Next, we can compute $\mathrm{HSIC}(X_0, X_{a})$ and iteratively increase $a$. If we pick $a$ large enough, the HSIC will eventually become arbitrarily small. MMD-mixing ensures that $\mathrm{HSIC}(X_s, X_{s+a}) \to 0$  for $a \to \infty$. For practical algorithms, we will fix a small $\epsilon>0$ and enforce  $\mathrm{HSIC}(X_s, X_{s+a^*}) < \epsilon$. In our experiments, we pick $\epsilon$ as the standard test threshold of the HSIC (\cf \abvFig \ref{fig:GaitMixing}).

\subsection{Extended MMD-mixing}
\label{sec:MMDmix}
Next, we extend our arguments to subtrajectories instead of considering two single points. Similarly as before, we use the notation $\mathbb{P}_{X_0,\ldots,X_s}$ and $\mathbb{P}_{X_{s+a},\ldots,X_{2s+a}}$ for the distributions of the subtrajectories of length $s$ (which is the joint distribution over the first $s$ states) and $\mathbb{P}_{(X_0,\ldots,X_s),(X_{s+a},\ldots,X_{2s+a})}$ for the joint distribution of the subtrajectories.
\begin{defi}[Extended MMD-mixing]
\label{def:MMDmixing}
We call a process \emph{extended MMD-mixing} if
\begin{equation}
\mathrm{MMD}_{(\otimes_{i=1}^s\mathcal{H}) \otimes (\otimes_{i=1}^s\mathcal{H})}[ \mathbb{P}_{X_0,\ldots,X_s} \otimes \mathbb{P}_{X_{s+a},\ldots,X_{2s+a}} ,\mathbb{P}_{(X_0,\ldots,X_s),(X_{s+a},\ldots,X_{2s+a})} ] \to 0 \quad \text{for } a\to \infty.
\end{equation}
\end{defi}
Clearly, we require an appropriate kernel to extend the MMD to joint distributions. 
In particular, tensor products of the base kernel need to be strong enough to distinguish the joint distributions. 
\citet{szabo2018characteristic} discuss various tensor constructions, which we will leverage here. 

\begin{lem}[Choice of Kernel I]
Let $k$ be a characteristic kernel. Then $k^s = \otimes_{i=1}^s k$ is also characteristic. 
\end{lem}
\begin{proof}
The statement follows directly from \citet[Theorem 4]{szabo2018characteristic}, which considers a more general problem setting. 
\end{proof}

Due to the tensor construction, we naturally obtain MMD-mixing with respect to $k$ as introduced in Definition \ref{def:weakMMDMixing}, for a process that is extended MMD-mixing with respect to $k^s$.

\subsection{Joint Independence}

To apply the mixing results to kernel two-sample testing, we require one more step. We need joint independence between all samples. 
Intuitively, this coincides with the dHSIC framework (\cf \eqref{eq:dHSIC}) and can be implemented through more sophisticated tensor kernels that embed multiple data points or subtrajectories simultaneously.
\begin{lem}[Choice of Kernel II]
Assume $k^s$ is a characteristic kernel. Then the tensor kernel $k^{s,n} = \otimes_{i=1}^n k^s$ is an $\mathcal{I}$-characteristic kernel, which makes the kernel suitable for joint independence testing.
\end{lem}
\begin{proof}
Follows from \cite[Theorem 4]{szabo2018characteristic}.
\end{proof}


To combine mixing with kernel two-sample testing, we require the following technical assumption.
\begin{defi}[Approximately $\epsilon$-independent]
\label{def:approxind}
Let $\{X_k\}$ be an MMD-mixing process. We call data $X = X_{a^*},X_{2a^*},\ldots,X_{na^*}$ approximately $\epsilon$-independent if there is a time shift $a^*$ and threshold $\kappa(\epsilon,n)$ that yields
\begin{equation}
\label{eq:approxind}
\mathbb{P}[   \mathrm{MMD}_b(X,\bar{X}) \geq \kappa] < \epsilon,
\end{equation}
where $\bar{X}$ is data that has been sampled independently from the stationary distribution $\mathbb{P}$.
\end{defi}
An important technical detail here is the fact that we consider the MMD with respect to the kernel $k$ and not the tensor kernel $k^{s,n}$. In practice, we apply a level $\epsilon$ HSIC test to multiple independent trajectories in order to determine an $a^*$, which satisfies \eqref{eq:approxind}.

\subsection{Connections to Other Mixing Notions}

There are various types of mixing that essentially all describe the decay of autocorrelations. An extensive discussion of the relationship between different measures of dependencies can be found in \cite{bradley2005basic}. The importance of covariance-based expressions for mixing is utilized in \cite{bradley1987dominations} to investigate how they can dominate each other. 

Mixing properties are notoriously difficult or even impossible to estimate, and many types of mixing do not apply to large classes of dynamical systems \citep{hang2017bernstein}.
Our proposed type of mixing can be estimated from data and yields advantageous theoretical properties. 
In \citet{mcdonald2011estimating}, the $\beta$-mixing coefficient is estimated through involved density estimations. 
While the authors emphasize that they solve a more difficult problem to obtain a solution to a simpler one, this is still one of the few existing approaches to estimate the speed of mixing. 

We start with defining the $\beta$-mixing coefficient as in \citep{mcdonald2011estimating}:
\begin{equation}
\beta (a) = \sup_s  \|\mathbb{P}_{-\infty}^s \otimes \mathbb{P}_{s + a}^\infty - \mathbb{P}_{s,a} \|_{\mathrm{TV}},   
\label{eq:betamixing}
\end{equation}
where $\mathbb{P}_{-\infty}^s$ is the joint distribution of the states $\{X_t\}_{t= -\infty}^s$ and $\mathbb{P}_{s+a}^\infty$ of $\{X_t\}_{t= s+a}^\infty$. With $\mathbb{P}_{s,a}$ we denote the joint distribution of the objects around the tensor sign, here \newline$(\{X_t\}_{t= -\infty}^s, \{X_t\}_{t= s+a}^\infty)  $ and use $\|\cdot\|_{\mathrm{TV}}$ for total variation. The process is $\beta$-mixing if $\beta (a) \to 0$ for $a \to \infty$. 


MMD-mixing is closely related with \eqref{eq:betamixing} and yields lower bounds.
\begin{lem}
\label{lem:tvmmd}
A $\beta$-mixing process is MMD-mixing for any bounded kernel.
\end{lem}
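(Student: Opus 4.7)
The plan is to exploit the fact that, for a bounded kernel, the MMD between two Borel probability measures is controlled by their total variation distance, and then combine this with a marginalization argument to get from $\beta$-mixing (which involves TV on the whole past/future) to MMD-mixing (which involves MMD on length-$s$ windows).

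First, I would recall the dual representation of the MMD:
\begin{equation*}
\|P - Q\|_{\mathrm{MMD}} = \sup_{\|f\|_{\mathcal{H}} \le 1} \bigl| \mathbb{E}_P[f] - \mathbb{E}_Q[f] \bigr|.
\end{equation*}
If the kernel $k$ is bounded with $\sup_x k(x,x) \le K$, then for any $f$ in the unit ball of $\mathcal{H}$ the reproducing property and Cauchy--Schwarz give $|f(x)| = |\langle f, k(\cdot,x)\rangle_{\mathcal{H}}| \le \sqrt{K}$. Hence every such $f$ lies in the $\sqrt{K}$-ball of $L^\infty$, and using the variational characterisation $\|P - Q\|_{\mathrm{TV}} = \tfrac{1}{2}\sup_{\|g\|_\infty \le 1} |\mathbb{E}_P[g] - \mathbb{E}_Q[g]|$ we obtain the standard bound
\begin{equation*}
\|P - Q\|_{\mathrm{MMD}} \le 2\sqrt{K}\,\|P - Q\|_{\mathrm{TV}}.
\end{equation*}

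Next, I would apply this to the finite-window distributions appearing in Definition \ref{def:MMDmixing}. The MMD on windows of length $s+1$ is defined with respect to some bounded product kernel $\tilde{k}$ built from $k$ (e.g.\ a tensor product), which remains bounded with constant $\tilde K$ depending on $K$ and $s$. By stationarity we may pick the reference index in \eqref{eq:betamixing} to match the window, so
\begin{equation*}
\bigl\|\mathbb{P}_{0}^s \otimes \mathbb{P}_{s+a}^{2s+a} - \mathbb{P}_{s,a}\bigr\|_{\mathrm{TV}} \le \bigl\|\mathbb{P}_{-\infty}^s \otimes \mathbb{P}_{s+a}^{\infty} - \mathbb{P}_{s,a}\bigr\|_{\mathrm{TV}} \le \beta(a),
\end{equation*}
where the first inequality is the data-processing / monotonicity-under-marginalisation property of total variation, and the second is the definition of $\beta$-mixing. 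Combining these two displays,
\begin{equation*}
\bigl\|\mathbb{P}_{0}^s \otimes \mathbb{P}_{s+a}^{2s+a} - \mathbb{P}_{s,a}\bigr\|_{\mathrm{MMD}} \le 2\sqrt{\tilde K}\,\beta(a),
\end{equation*}
and letting $a\to\infty$ yields MMD-mixing.

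The only genuinely delicate points are (i) making explicit which bounded kernel is used on the product window space, so that the constant $\tilde K$ is finite and independent of $a$, and (ii) justifying the marginalisation inequality for TV, which is standard but worth stating cleanly. The MMD--TV bound itself is routine once the kernel is bounded, so I expect the main obstacle to be notational rather than mathematical: carefully tracking the window indices so that the event space on which TV is taken genuinely contains the event space underlying the MMD.
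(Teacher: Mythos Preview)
Your proposal is correct and follows the same route as the paper: the paper's proof simply cites the bound $\|\mathbb{P}-\mathbb{Q}\|_{\mathrm{MMD}}\le C\,\|\mathbb{P}-\mathbb{Q}\|_{\mathrm{TV}}$ from \citet[Theorem~21(iii)]{sriperumbudur2010hilbert} (with $C$ the supremum of the kernel) and leaves the rest implicit. Your version is in fact more complete than the paper's, since you (i) derive the MMD--TV inequality directly from the reproducing property rather than citing it, and (ii) make explicit the marginalisation step from the infinite past/future in $\beta(a)$ to the finite windows in Definition~\ref{def:MMDmixing}, a passage the paper's one-line proof silently assumes.
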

\begin{proof}
This property follows by considering Hilbert space embeddings of probability distributions. In particular, \citet[Theorem 21 (iii)]{sriperumbudur2010hilbert} shows that
\begin{equation}
    \| \mathbb{P} - \mathbb{Q}\|_{\mathrm{MMD}} \leq C \| \mathbb{P} - \mathbb{Q}\|_{\mathrm{TV}},
\end{equation}
where $C$ is the supremum of the corresponding kernel. 
\end{proof}

The other direction does not always hold.
For instance, deterministic dynamical systems are, in general, not $\beta$-mixing \citep{hang2017bernstein}. 

\begin{lem}
A $\mathcal{C}$-mixing process with respect to the underlying function space $\mathcal{C}$ is MMD-mixing with respect to the kernel $k$, if $\mathcal{H} \subset \mathcal{C}$, where $\mathcal{H}$ is the corresponding RKHS. 
\end{lem}
\begin{proof}
 Following Definition 2 in \citet{hang2018kernel}, $\mathcal{C}$-mixing is essentially defined as \eqref{eq:mixverygeneral}, where $\mathcal{F}$ is chosen as the function space $\mathcal{C}$ and $\mathcal{G}$ as $L^1$ on the natural filtration of the system. By considering smaller spaces for $\mathcal{F}$ and $\mathcal{G}$, such as $\mathcal{H}$, we directly obtain the result.
\end{proof}

In particular, if $k$ is the squared exponential kernel then the corresponding RKHS is  well-investigated \cite{steinwart2008support}. In particular, the RKHS is contained in common choices for $\mathcal{C}$, such as BV$(S)$, Lip$(S)$, and $C^1(S)$.

Further, recent results show that convergence in MMD metrizes weak convergence in the space of probability distributions~\citep{simon2020metrizing}.  
Thus, convergence in MMD is applicable to discrete data and Dirac distributions. 
This may be particularly relevant when considering mixing properties of deterministic dynamical systems even further.

In practice, mixing is usually exponentially fast in the gap $a$. In all of our numerical experiments, it was sufficient to estimate a single time shift $a^*$ in the MMD-mixing sense between two data points.
The joint dHSIC estimation yields stronger theoretical properties, however, might also induce some conservatism into the estimation. 

\section{Two-sample Test for Dynamical Systems}
Next, we utilize mixing to state our main result: a kernel two-sample test for dynamical systems. 
Due to MMD-mixing, we are able to enforce arbitrarily small dependencies between consecutive samples. 
In particular, we use our notion of approximately $\epsilon$-independent data (\cf \eqref{eq:approxind}) to adjust the test threshold accordingly.
For $a \to \infty$, we actually recover the i.i.d.\ setting from \citet{gretton2012kernel}.

\begin{prop}
\label{prop:main}
Assume $\{X_k\}$ and $\{Y_k\}$ are stationary and MMD-mixing dynamical systems with distributions $\mathbb{P}_X,\mathbb{P}_Y$. Further, assume data $X_{a^*},X_{2a^*},\ldots,X_{na^*}$ and $Y_{a^*},Y_{2a^*},\ldots,Y_{na^*}$ are sampled i.i.d.\ from $\mathbb{P}_X$ and $\mathbb{P}_Y$, respectively. If we obtain for the empirical estimate \eqref{eq:MMDemp} that
\begin{equation}
    \mathrm{MMD}_b^2[ X,  Y] > \kappa(n,\alpha),
\end{equation}
then we can conclude with probability $1-\alpha$ that $\mathbb{P}_X \neq \mathbb{P}_Y$.
\end{prop}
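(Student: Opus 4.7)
The plan is to reduce the claim to the standard i.i.d.\ kernel two-sample test of \citet{gretton2012kernel} by leveraging MMD-mixing to certify that the subsampled sequences behave i.i.d.\ in the asymptotic regime the proposition works in. The argument naturally splits into two pieces: a mixing argument that converts the dynamical-systems setting into an effective i.i.d.\ setting, and then a direct invocation of the established kernel two-sample test.

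First I would argue that under stationarity the marginal law of $X_{ka^*}$ is $\mathbb{P}_X$ for every $k$, and symmetrically for $Y_{ka^*}$. Combined with MMD-mixing (\abvDef~\ref{def:MMDmixing}), this gives that the joint distribution of any finite collection $(X_{a^*},\ldots,X_{na^*})$ converges, in MMD against the tensor kernel $k^s$, to the product measure $\mathbb{P}_X^{\otimes n}$ as $a^* \to \infty$. Because $k^s$ is $\mathcal{I}$-characteristic (Theorem of \citet{szabo2018characteristic}), this convergence certifies \emph{joint} independence, not just pairwise decorrelation, which is exactly what Gretton's concentration bounds require. The same reasoning applies to the $Y$ stream. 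This justifies the i.i.d.\ hypothesis stated in the proposition, which should be read as the $a^* \to \infty$ idealization that the preceding sections set up.

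Second, once the two sequences are treated as i.i.d.\ draws from $\mathbb{P}_X$ and $\mathbb{P}_Y$, I would directly apply the finite-sample guarantee of the kernel two-sample test: the threshold $\kappa(n,\alpha)$ is constructed so that, under $H_0\colon \mathbb{P}_X = \mathbb{P}_Y$, the probability that the empirical estimator \eqref{eq:MMDemp} exceeds $\kappa(n,\alpha)$ is at most $\alpha$. Observing $\mathrm{MMD}_b^2[\mathbb{P}_X,\mathbb{P}_Y] > \kappa(n,\alpha)$ therefore rejects $H_0$ at level $\alpha$, yielding $\mathbb{P}_X \neq \mathbb{P}_Y$ with probability at least $1-\alpha$.

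The genuinely subtle point, and the main obstacle if one wanted a fully non-asymptotic statement, is the gap between the \emph{asymptotic} independence delivered by MMD-mixing and the \emph{exact} i.i.d.\ hypothesis that Gretton's concentration inequalities consume. In the proposition this gap is hidden by the wording ``sampled i.i.d.,'' which is legitimate for an asymptotic result but would require a quantitative residual term in a finite-sample version. Operationally, that residual is controlled by the data-driven choice of $a^*$ in \abvSec~\ref{sec:MMDmix}: picking $a^*$ so that $\mathrm{HSIC}(X_s, X_{s+a^*}) < \epsilon$ at the HSIC test threshold means one cannot statistically distinguish the subsampled sequence from an i.i.d.\ one, which is precisely what makes the reduction to Gretton's test legitimate in practice.
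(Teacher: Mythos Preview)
Your proposal is correct and follows the same approach as the paper: reduce to the i.i.d.\ setting via MMD-mixing and then invoke the standard kernel two-sample test of \citet{gretton2012kernel}. The paper's own proof is in fact just a two-sentence sketch stating exactly this reduction, so your write-up is already more detailed than the original, and your observation about the asymptotic-versus-exact-i.i.d.\ gap is precisely the caveat the paper pushes to the subsequent Remark.
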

In practice, the autocorrelations will always be greater than zero. 
Also, it is important that either both systems have the same mixing speed or $a^*$ is chosen with respect to the system with the slower mixing rate. 

We state the main result that, in contrast to prior work, foregoes the need for independence assumptions.
\begin{theo}
Assume the same setting as above, however, instead of i.i.d.\ data, we assume
that $a^*$ is a time shift that yields approximately $\epsilon$-independent (\cf \ref{eq:approxind}) data  $X = X_{a^*},X_{2a^*},\ldots,X_{na^*}$ and $Y = Y_{a^*},Y_{2a^*},\ldots,Y_{na^*}$. 

If we obtain for the empirical estimate \eqref{eq:MMDemp} that
\begin{equation}
    \mathrm{MMD}_b^2[ X,  Y] > \kappa(n,\alpha),
\end{equation}
then we can conclude with probability $1-\alpha'$ that $\mathbb{P}_X \neq \mathbb{P}_Y$, where $\alpha' = \frac{1}{3}(\alpha + 2\epsilon)$
\end{theo}
\begin{proof}
 First, we will decompose the test statistic into the i.i.d.\ problem and a second term that captures the dependency in the data.
 Assume $\Bar{X},\Bar{Y}$ are i.i.d.\ data sets (ghost samples) that are drawn from $\mathbb{P}_X$ and $\mathbb{P}_Y$, respectively. A similar argument is frequently used for symmetrization and referred to in \citet[P.\,736]{gretton2012kernel}.
\begin{align}
    &|\mathrm{MMD}(\mathbb{P}_X, \mathbb{P}_Y) - \mathrm{MMD}_b(X,Y)| \\
    =  &|\mathrm{MMD}(\mathbb{P}_X, \mathbb{P}_Y) - \mathrm{MMD}_b(\Bar{X},\Bar{Y})  + \mathrm{MMD}_b(\Bar{X},\Bar{Y}) - \mathrm{MMD}_b(X,Y)| \\
    \leq  &|\mathrm{MMD}(\mathbb{P}_X, \mathbb{P}_Y) - \mathrm{MMD}_b(\Bar{X},\Bar{Y})|  + |\mathrm{MMD}_b(\Bar{X},\Bar{Y}) - \mathrm{MMD}_b(X,Y)| \\
     = & \|\mathrm{MMD}(\mathbb{P}_X, \mathbb{P}_Y) - \mathrm{MMD}_b(\Bar{X},\Bar{Y})|  + |\| \hat{\mu}_{\Bar{X}} - \hat{\mu}_{\Bar{Y}} \|_\mathcal{H} -  \| \hat{\mu}_X - \hat{\mu}_Y \|_\mathcal{H}| \\
    \leq & |\mathrm{MMD}(\mathbb{P}_X, \mathbb{P}_Y) - \mathrm{MMD}_b(\Bar{X},\Bar{Y})| + \| \hat{\mu}_{\Bar{X}} - \hat{\mu}_{\Bar{Y}} - \hat{\mu}_X + \hat{\mu}_Y \|_\mathcal{H} \\
    \leq & |\mathrm{MMD}(\mathbb{P}_X, \mathbb{P}_Y) - \mathrm{MMD}_b(\Bar{X},\Bar{Y})|  + \mathrm{MMD}_b(\Bar{X},X) + \mathrm{MMD}_b(\Bar{Y},Y)
\end{align}
We use the identity $ \mathrm{MMD}_b(X,Y) =  \| \hat{\mu}_X - \hat{\mu}_Y) \|_\mathcal{H}$ \cite[Eq. 3.31]{muandet2017kernel} and apply the inverse triangle inequality. The first term follows directly from \cite{gretton2012kernel} (\cf \abvEq \ref{eq:boundiid}) and can be bounded by $\kappa$.
By design, the time shift $a^*$ was chosen to induce the concentration
\begin{align}
        \mathbb{P}\left[ \mathrm{MMD}_b(X, \bar{X}) > \kappa \right] \leq \epsilon
\end{align}
and respectively also for $\mathrm{MMD}_b(Y, \bar{Y})$.
Thus, we obtain in total
\begin{align}
     &\mathbb{P}\left[|\mathrm{MMD}(\mathbb{P}_X, \mathbb{P}_Y) - \mathrm{MMD}_b(X,Y)| > \kappa \right] \\
     \leq &\mathbb{P}\left[|\mathrm{MMD}(\mathbb{P}_X, \mathbb{P}_Y) - \mathrm{MMD}_b(\Bar{X},\Bar{Y})|  + \mathrm{MMD}_b(\Bar{X},X) + \mathrm{MMD}_b(\Bar{Y},Y) > \kappa \right] \\
     \leq &{\scriptstyle \frac{1}{3}\mathbb{P}\left[|\mathrm{MMD}(\mathbb{P}_X, \mathbb{P}_Y) - \mathrm{MMD}_b(\Bar{X},\Bar{Y})|> \kappa \right]  +\frac{1}{3} \mathbb{P}[ \mathrm{MMD}_b(\Bar{X},X) > \kappa] + \frac{1}{3} \mathbb{P}\left[\mathrm{MMD}_b(\Bar{Y},Y) > \kappa \right]}\\
     \leq  &\frac{1}{3}(\alpha + 2\epsilon).
\end{align}

\end{proof}

\begin{rem}
 By adapting the concentration results inside the kernel two-sample test, \ie McDiarmid's inequality, we can directly embed significant autocorrelations in the test statistic and potentially be more data-efficient and have tighter bounds. These results, however, would require further technical assumptions (\cf Assumption 3.1. in \citep{cherief2019finite}) and are left for future work. Here, we focus on introducing an efficient, sound, and practically relevant statistical test for dynamical systems.
\end{rem}

In practice, we usually do not have access to the full state $X_k$. Instead we receive measurements $X'_k = g(X_k) + \xi_k$, where $g$ is an observation function and $\xi_k\stackrel{\mathrm{iid}}{\sim} \mathbb{P}_\xi$ measurement noise. Intuitively, the function $g$ could be regarded as sensors that measure some quantity that depends on the underlying system.
Thus, we could also infer different systems when, \eg the measurement noise or the sensors are different. 
Further, it is not always possible to reconstruct the state, and appropriate observability assumptions would be required for this. 
However, this is not due to our test but an issue of the problem itself since the true underlying state is unknown. 
Nonetheless, we are able to apply the proposed test to measurements $X'_k$ by considering the pushforward of the measure $g(\mathbb{P}_X)$ together with $\mathbb{P}_\xi$.
\begin{prop}
Assume the same setting as in theorem \ref{prop:main}, however, with noisy measurements $X'_k = g(X_k) + \xi_k$ and $Y'_k = h(Y_k) + \nu_k$ and independent noise. If 
$\mathrm{MMD}_b^2[ {X^{'}},  {Y^{'}}] > \kappa(n,\alpha)$, then we conclude that $\mathbb{P}_{X^{'}} \neq \mathbb{P}_{Y^{'}}$ with high probability.
\end{prop}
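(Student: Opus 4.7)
The plan is to deduce Proposition 2 from Proposition \ref{prop:main} by showing that the observation processes $\{X'_k\}$ and $\{Y'_k\}$ inherit the three assumptions used there --- stationarity, ergodicity, and MMD-mixing --- from the underlying state processes. Once this reduction is in place, the test statistic in the statement is exactly the one to which Proposition \ref{prop:main} applies, so the significance guarantee carries over verbatim.

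For stationarity, every finite-dimensional law of $\{X'_k\}$ is the image of the corresponding law of $\{X_k\}$ under coordinate-wise application of $g$, convolved with an i.i.d.\ product of $\mathbb{P}_\xi$; since $g$ is deterministic and $\{\xi_k\}$ is independent of $\{X_k\}$, time-shift invariance of the joint distributions transfers directly. For ergodicity, the coupled process $\{(X_k,\xi_k)\}$ is stationary and ergodic on $S\times\mathbb{R}^d$ (the product of a stationary ergodic process with an independent i.i.d.\ sequence is ergodic), and $\{X'_k\}$ arises as its image under the measurable factor map $(x,\xi)\mapsto g(x)+\xi$; measurable factors of ergodic processes remain ergodic.

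The substantive step is the MMD-mixing transfer. Letting $k$ be a bounded characteristic kernel on the observation space, define the pullback kernel
\begin{equation*}
\tilde{k}(x,y) \;:=\; \mathbb{E}_{\xi,\xi' \sim \mathbb{P}_\xi}\bigl[k\bigl(g(x)+\xi,\, g(y)+\xi'\bigr)\bigr]
\end{equation*}
on the state space; $\tilde{k}$ is symmetric, positive semi-definite, and bounded by $\sup k$. Since $\xi_s$ and $\xi_{s+a}$ are independent of each other and of the states, a direct computation using the tensor kernel $k^2 = k\otimes k$ yields the isometry
\begin{equation*}
\bigl\|\mathbb{P}_{X'_s}\otimes \mathbb{P}_{X'_{s+a}} - \mathbb{P}_{X'_s,X'_{s+a}}\bigr\|_{\mathrm{MMD},\,k^2} \;=\; \bigl\|\mathbb{P}_{X_s}\otimes \mathbb{P}_{X_{s+a}} - \mathbb{P}_{X_s,X_{s+a}}\bigr\|_{\mathrm{MMD},\,\tilde{k}^2}.
\end{equation*}
Weak MMD-mixing of $\{X_k\}$ with respect to the bounded kernel $\tilde{k}$ --- automatic from $\beta$-mixing via Lemma \ref{lem:tvmmd} --- therefore implies weak MMD-mixing of $\{X'_k\}$ with respect to $k$, and an analogous tensor computation extends the statement to the subtrajectory version of MMD-mixing. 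The identical argument applies to $\{Y_k\}$, so Proposition \ref{prop:main} applies to the $a^\ast$-subsampled observation streams $X'_{a^\ast},\ldots,X'_{na^\ast}$ and $Y'_{a^\ast},\ldots,Y'_{na^\ast}$ and yields the claimed rejection guarantee.

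The main obstacle is precisely this MMD-mixing transfer: a priori, projecting through $g$ and convolving with noise could inflate the discrepancy between the joint law and the product of marginals. The pullback-kernel identity above is the one non-bookkeeping step in the proof and rules out such inflation by exhibiting the map $\mu \mapsto (g\otimes g)_\ast \mu \ast (\mathbb{P}_\xi\otimes\mathbb{P}_\xi)$ as an MMD isometry between appropriately paired kernels; this crucially uses the across-time independence of $\{\xi_k\}$ and the boundedness of $k$.
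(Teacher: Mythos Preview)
The paper does not actually prove this proposition. It is stated as an immediate consequence of Proposition~\ref{prop:main}: the sentence preceding it says that one simply applies the test to the pushforward measures $g(\mathbb{P}_X)$ convolved with $\mathbb{P}_\xi$, and since ``same setting'' includes the i.i.d.\ subsampling hypothesis, the result follows verbatim from Proposition~\ref{prop:main} with the observation process in place of the state process. No transfer argument is given or needed.

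Your approach is considerably more ambitious: rather than simply assuming the i.i.d.\ hypothesis holds for the subsampled observations, you attempt to \emph{derive} stationarity, ergodicity, and MMD-mixing of $\{X'_k\}$ from the corresponding properties of $\{X_k\}$. The stationarity and ergodicity arguments are correct and standard, and the pullback-kernel isometry is a nice computation. However, there is a genuine gap in the MMD-mixing step. Your identity shows that MMD-mixing of $\{X'_k\}$ with respect to $k$ follows from MMD-mixing of $\{X_k\}$ with respect to the pullback kernel $\tilde{k}$. But the hypothesis of Proposition~\ref{prop:main} gives MMD-mixing of $\{X_k\}$ with respect to some fixed kernel (the one used in the test on the state space), not with respect to $\tilde{k}$, and MMD-mixing is kernel-dependent. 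You patch this by writing ``automatic from $\beta$-mixing via Lemma~\ref{lem:tvmmd}'', but $\beta$-mixing is \emph{not} assumed anywhere in Proposition~\ref{prop:main}; invoking it strengthens the hypotheses beyond what the statement allows. Without that extra assumption, the transfer does not go through, and you are left with exactly the route the paper takes: assume the required properties (in particular the i.i.d.\ subsampling) directly for the observation process.
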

In general, it is not clear what states to choose for an appropriate representation of a dynamical system, \eg to accurately model human walking.
If we obtain rich information through sensor measurements then this can often be sufficient for subsequent downstream tasks (\cf section \ref{sec:experiment}).

\section{Illustrative Examples}
 \begin{figure}[tb]
     \centering 
     \includegraphics[width=0.9\textwidth]{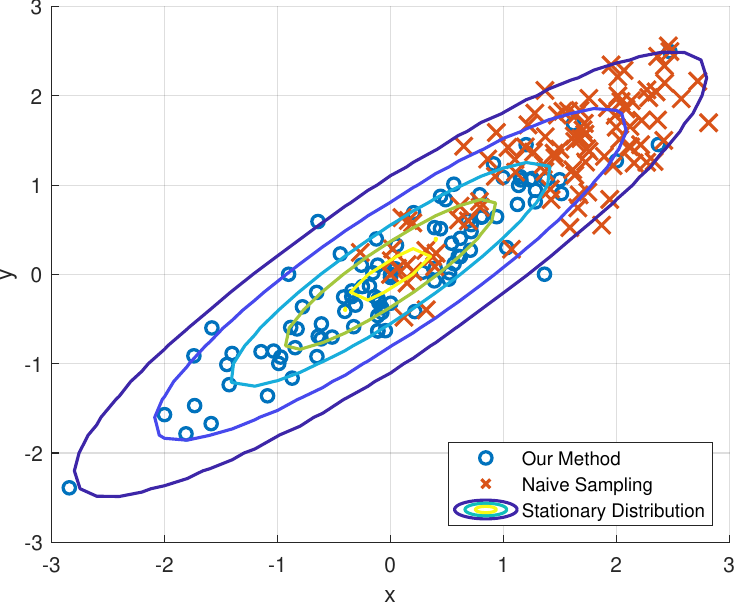}
     \caption{Scatter plot of an illustrative 2-dimensional LTI system. 
     The system was designed to yield slow mixing times and initialized at $X_0 = 0$.
     The red crosses represent the first $100$ states $X_0,X_1,\ldots,X_{100}$. The blue circles represent states with an enforced time shift of $a^* = 75$ between samples. The stationary distribution of the system is illustrated as a contour plot.}
     \label{fig:LTI}
 \end{figure}
In this section, we illustrate two critical properties of our method: i) respecting the estimated time shift $a^*$ yields samples whose distribution is indistinguishable from the stationary distribution, ii) violating the estimated time shift $a^*$ leads to clustering effects that skew and bias the empirical distributions.
More details on all experiments (deterministic and stochastic systems) are provided in the appendix.

In practice, it is usually sufficient to consider data from two points in time $\mathbb{P}_s$ and $\mathbb{P}_{s+a}$---in particular, when kernel two-sample testing is also based on points and not subtrajectories.
Thus, we estimate $a^*$ based on MMD-mixing (\cf \abvDef\ref{def:weakMMDMixing}).

\subsection{Linear Time-invariant System}
Linear time-invariant (LTI) systems are prevalent in control and systems theory due to many analytically tractable properties. 
In particular, we can explicitly determine the stationary distribution (\cf \eqref{eq:stationaryLTI} in the appendix) and, thus, draw i.i.d.\ samples. 
Consider the dynamics
\begin{equation}
    X_{k+1}  = AX_k + \epsilon_k,
    \label{eq:LTI}
\end{equation}
where $\epsilon_k \stackrel{\mathrm{iid}}{\sim} \mathcal{N}(0,\Sigma)$. 
Further, assume all eigenvalues of $A \in \mathbb{R}^{d\times d}$ are located within the unit circle and $x_0 = 0$ to avoid potential transient behavior. 
We can now quantify the speed of mixing directly through the eigenvalues of $A$ and $\Sigma$. If $A$ has eigenvalues close to the boundary of the unit sphere, then this results in slow mixing. The same holds for small process noise. On the contrary, small eigenvalues of $A$ and large noise result in rapid mixing. An intuitive corner case is $A= 0$, which yields perfectly independent samples.

In \abvFig \ref{fig:LTI}, we illustrate the behavior of a two-dimensional slowly mixing system (\ref{eq:LTI}). In red, we plot the first $100$ states of the system $X_1,X_2,\ldots,X_{100}$, and in blue, states with an enforced time shift of $a^* = 75$ between consecutive samples $X_{a^*}, X_{2a^*},\ldots,X_{100a^*}$. 
The estimated time shift $a^* = 75$ is obtained in the MMD-mixing sense as described in \abvSec \ref{sec:MMDmix} and ensures that the HSIC is below the test threshold (\cf \abvFig\ref{fig:LTImixingFig1} in the appendix).

In \abvFig \ref{fig:LTI}, we further show a contour plot of the stationary distribution. Samples that were drawn based on our method coincide with the stationary distribution. The first $100$ states, on the other hand, cluster in one region of the state space and are subject to heavy auto-correlations. The red crosses are clearly not representative of the stationary distribution.
To investigate this further, we applied kernel two-sample tests to distinguish samples that are directly drawn from the stationary distribution and samples drawn based on our method with appropriate time shifts. 
As expected, this turned out to be impossible, and we cannot distinguish between the two data sets. Details are given in the appendix.

We want to emphasize that mixing can be arbitrarily slow. In particular, it is possible that the system does not mix at all \citep{simchowitz2018learning}. With the proposed method, we would
notice this since we would not be able to estimate an $a^*$. 
Thus, we can decide whether the kernel two-sample test is applicable or not. 
We have also constructed non-mixing examples and obtained a constant HSIC that does not decrease over time (\cf appendix).

\subsection{Lorenz Attractor}
  \begin{figure}[tb]
      \centering 
      \includegraphics[width=0.9\textwidth]{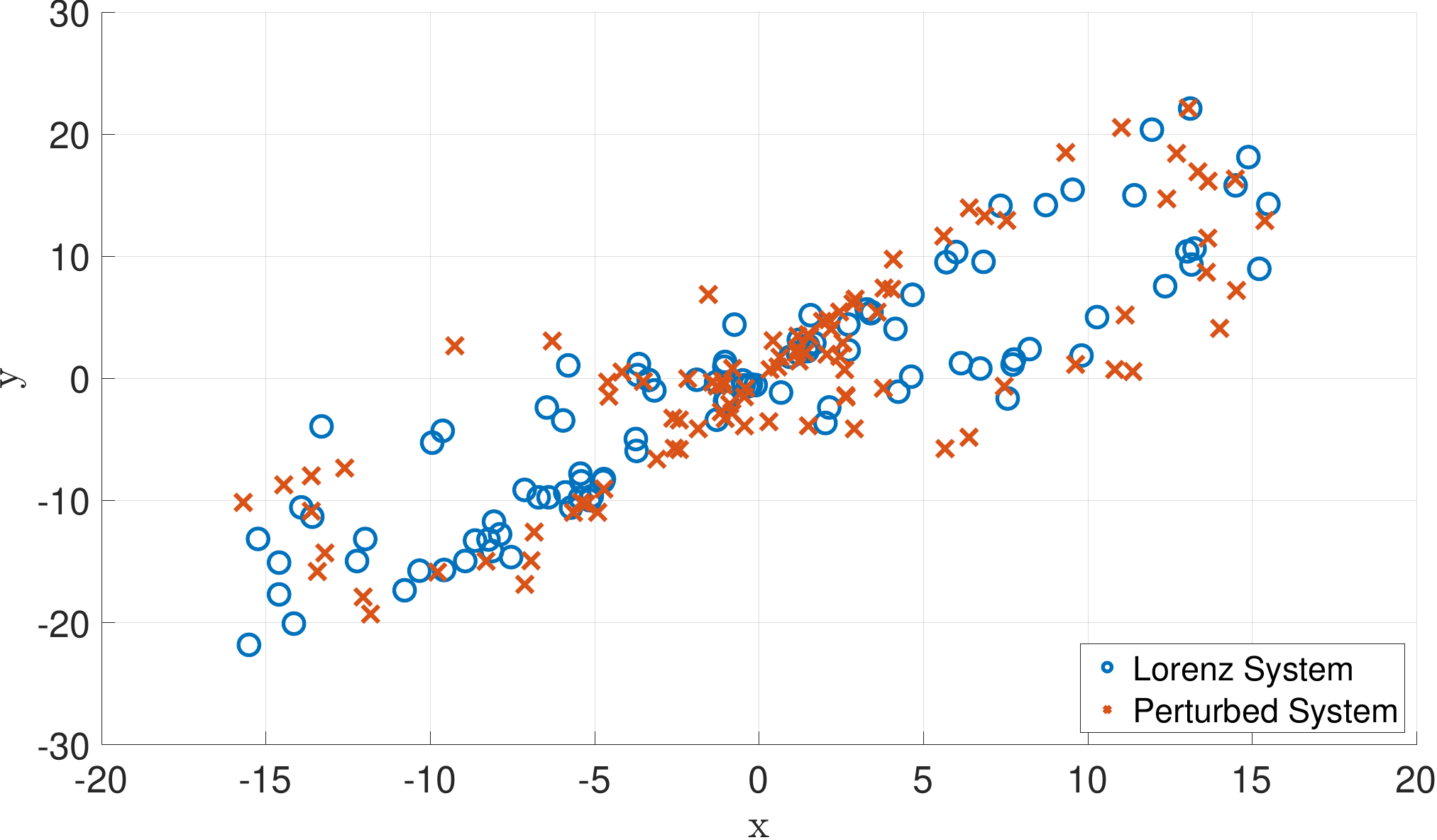}\\
      \caption{Scatter plot of the classical Lorenz system in blue circles (\abvEq \eqref{eq:lorenz1}---\eqref{eq:lorenz3}) and samples from a system with slightly perturbed parameters in red crosses. Both systems are randomly initilized and sampled at time instances $t_1 = 20, t_2 = 40, \ldots, t_{100}=2000$. To the human eye, the distributions look slightly different.}
      \label{fig:lorenzscatter}
  \end{figure}

 To illustrate the usefulness of the new mixing notion we present the example of the Lorenz system, which is illustrated in \abvFig \ref{fig:lorenzscatter} and given by the following equations:
\begin{align}
    \label{eq:lorenz1}
    \Dot{x} &= 10(y-x) \\
    \label{eq:lorenz2}
    \Dot{y} &= 28x - y - xz \\
    \label{eq:lorenz3}
    \Dot{z} &= xy - \frac{8}{3}z.
\end{align}
The Lorenz attractor is a famous chaotic and deterministic dynamical system that is known to mix in a topological sense \citep{luzzatto2005lorenz}. Other notions, such as $\beta$-mixing, are too strong and not suitable here. 
In the appendix, we provide empirical evidence that the Lorenz system mixes with respect to the herein introduced notion of MMD-mixing.
Connecting topological mixing on a rigorous level with MMD-mixing remains for future work. 

Further, we show numerically that we can distinguish between two systems with slightly different parameters and obtain the required properties of the kernel two-sample test. For the estimated $a^*$, the test is well-behaved. When we chose the estimated time shift too small, then the amount of false positives explodes. 

\subsection{$\mathcal{C}$-mixing Systems}
We also consider the three examples that are discussed in \citet{hang2018kernel} and are provably $\mathcal{C}$-mixing. 
Due to the Gaussian kernel that we use and the choices for $\mathcal{C}$ (Lip$(S)$ and BV$(S)$, \cf \cite{hang2018kernel} for details), $\mathcal{C}$-mixing directly implies MMD-mixing. The empirical results confirm the MMD-mixing property. 

We considered the following systems:

$\beta${\bf{-map:}} For $\beta >1$ and  $x_0 \in (0,1)$, the dynamical system is defined by 
\begin{equation}
    x_{k+1} = \beta x_k \mod 1.
\end{equation}

{\bf{Logistic map:}} For $x_0 \in (0,1)$, the logistic map is defined by 
\begin{equation}
    x_{k+1} = 4 x_k (1 - x_k).
\end{equation}

{\bf{Gauss map:}} For $x_0 \in (0,1)$, the Gauss map is defined by 
\begin{equation}
    x_{k+1} = \frac{1}{x_k} \mod 1.
\end{equation}
For all examples, the speed of mixing is extremely fast and after $a^*=10$, the data is close to independent. We initialized $x_0$ uniformly on the interval $(0,1)$ and used $\beta = e$.

Interestingly, the Lebesgue densities of the stationary distributions are also known and stated in \cite{hang2018kernel}. This would allow for kernel two-sample testing, exactly as done for the OU-process in Sec. \ref{sec:comparetostationary}. Since the mixing is extremely fast here, we expect the same result---time shifted data that respects the speed of mixing is indistinguishable from data that has been drawn directly from the stationary distribution.

\section{Experimental Example---Human Walking}
\label{sec:experiment}

We apply the developed kernel two-sample test to real-world experimental data \footnote{The data will be made available upon acceptance}.
We consider gait data of human subjects walking on a treadmill.
Detecting characteristics and alterations in human gait is a highly relevant problem in disease prediction, diagnosis and progress monitoring as well as in biometrics \citep{Nguyen2019_JNER,gassner2020gait,muro2014gait}. An example data set with a known ground truth label is obtained by letting subjects walk with and without a knee orthosis. Our goal is to classify each measured trajectory correctly with the labels \emph{orthosis} and \emph{no orthosis}.

\subsection{Data Collection}
The inertial measurement unit (IMU) data of foot motion were collected from 38 healthy subjects without any restrictions in gait or illnesses that affect their walking ability. 
The data collection was conducted in the motion analysis laboratory of one of the authors’ universities on a Mercury Med treadmill. The IMU sensors were attached to the test subjects' shoes using velcro straps. 
The measurements were taken for 90 seconds each trial under the following conditions: walking at very slow (\SI{1.5}{\kilo\meter\per\hour}), slow (\SI{3}{\kilo\meter\per\hour}), slow with simulated gait pathology, and normal walking speed (\SI{5}{\kilo\meter\per\hour}). For the simulated gait pathology, the mobility of the left knee joint was restricted using a knee orthosis, which was fixed in a neutral position to disable further extension or flexion of the joint.
 The subjects were asked to stand still with both feet next to each other for 3 seconds at the beginning and the end of each trial, 
 Before the trials, the subjects were able to practice walking on the treadmill. 
 They were allowed to use the handrail of the treadmill if necessary. 
 For three subjects, the orthosis experiment could not be carried out.
 An approval from the local ethics committee was obtained.

\subsection{Description of the Statistical Test}
  \begin{figure}[tb]
      \centering 
      \includegraphics[width=0.9\textwidth]{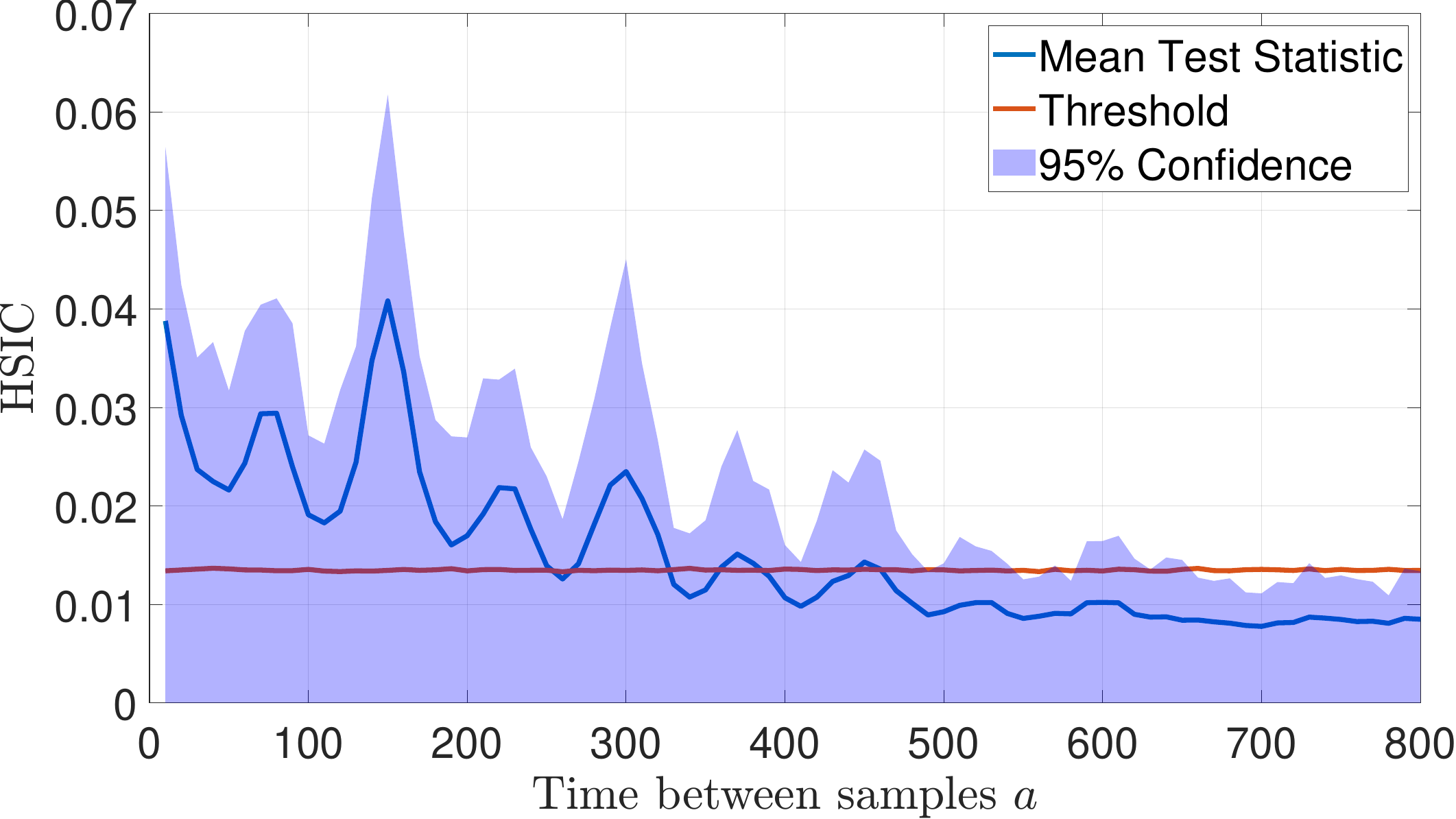}\\
      \caption{Mixing properties of gait data. On the $x$-axis, we depict the time shift $a$ between consecutive samples. One time step corresponds to 0.01 seconds. The $y$-axis shows the dependence between data points with respect to the corresponding time shift.
      The initial point is randomized, and the estimation is repeated 50 times. Depicted is the mean of the test statistic and the $95\%$ upper confidence bound. We also show the threshold $\kappa$ of the independence test. When the blue line is below the red line, it is not possible to infer statistical dependence between the data points.}
      \label{fig:GaitMixing}
  \end{figure}
We consider the raw gyroscopic data of the left foot for 35 subjects. The gyroscopic data is three-dimensional and consists of roughly $14\,000$ data points per trajectory.

\subsubsection{Mixing Properties}
First, we quantify the mixing properties of human walking. 
We estimate MMD-mixing (\cf \abvSec \ref{sec:MMDmix}) by applying the HSIC to the 35 subjects. 
We consider the trials with and without the orthosis simultaneously, which yields $70$ independent trajectories. 
We draw an initial point $X_k$ from a uniform distribution between $k=2000$ and $k=4000$ and fix that point for all trajectories. Afterward, we compute $ \mathrm{HSIC}(X_k,X_{k+a})$ for various values of $a$. In \abvFig \ref{fig:GaitMixing}, the results are illustrated and we can see the decrease of dependencies. 
To exclude numerical artifacts, we repeat the estimation of the mixing properties 50 times with randomly chosen initial points.

\subsubsection{Classification}
We compare MMD-based classification against standard baselines for the 70 trajectories.

\textbf{MMD-based classification:} We choose one trajectory of interest, for which we forget the correct label, and separate it from all other trajectories, for which the correct label is known. 
We pick a random initial point $X_0$ uniformly distributed between $k= 2000$ and $k=3000$. 
After time shifting the data with respect to $a^*$, we estimate the MMD \eqref{eq:MMDemp} between the trajectory of interest and all other trajectories. 
Then, we use the label of the trajectory with the smallest MMD to label the unlabeled trajectory.  
Intuitively, unrestricted trajectories look more similar among themselves than trajectories with a restricted knee, and vice versa.

\textbf{Baseline:}
We compare the proposed approach to common baselines for classification of biomedical data \citep{bidabadi2019classification,misgeld2015body, tien2010characterization}. 
We consider the following features:
\begin{compactitem}
    \item Maximum and minimum value of each dimension;
    \item The 4 largest frequencies based on a Fourier transform;
    \item The $2$-norm over time and the state dimensions.
\end{compactitem}
In total, this results in 19 features for each trajectory that are used to train linear classifiers---support vector machines (SVM) and logistic regression (LR). Further, we use a 3-fold cross-validation technique. 
We repeat the training also $1000$ times and report the average accuracy and standard deviation in Table~\ref{tab:walking}.

\subsection{Results}
Our empirical analysis reveals that human walking mixes with respect to MMD-mixing (\cf \abvSec \ref{sec:MMDmix}). Further, as illustrated in \abvFig\ref{fig:GaitMixing}, we can effectively estimate the speed of mixing. After roughly five footsteps, the dependence of data to its past is mostly gone, and we can treat data as independent. 

For MMD-based classification, we use a time shift of $a^* = 400$. This results in $25$ points per trajectory. A larger choice of $a^*$ around $600$ would be closer to our theoretical results. However, due to the limited amount of data, this would reduce the number of available samples even further.  

We run all classification algorithms $1000$ times and report the average accuracy and standard deviation in Table~\ref{tab:walking}.
Our method achieves the best accuracy, and we are able to classify $99.99\%$ of the subjects with \emph{no orthosis} correctly. Some very few subjects are repeatedly misclassified when walking with the orthosis, which might be explained using
futher insights and data analysis. For the other methods, in contrast, there is no apparent structure in the errors. 

\subsection{Discussion}
\begin{table}[tb]
\begin{center}
    \caption{Classification accuracy for labeling the trajectories correctly into the labels \emph{orthosis} and \emph{no orthosis}. Mean accuracy with standard deviation over $1000$ repetitions.
    }
  \label{tab:walking}
  \begin{tabular}{ c c c }
  \toprule
Our method & SVM & LR \\
      \midrule
$\textbf{95.7}\% \pm 2.4\%$ & $86.9\% \pm 4.4\%$ & $92.5\% \pm 3.2\%$\\
\bottomrule
  \end{tabular}
\end{center}
\end{table}
The above results show that the proposed method works well on a practically relevant non-trivial problem and outperforms common baselines.
We spent a reasonable amount of time on designing good features in the comparison. While the accuracy of the linear classifiers could potentially be improved by adding additional features, designing such features requires more insight into the problem and system properties, which is unavailable in many applications.
In order to improve the accuracy of our method, it would suffice to add more data (\ie consider longer trajectories). 
Further, it can directly be applied to a range of similar problems.

 Classification and clustering algorithms based on the MMD can be applied in more general settings \citep{jegelka2009generalized}. 
 Thus, our proposed nearest-neighbor approach for dynamical systems should generalize to more sophisticated clustering algorithms, which could yield unprecedented insights into the behavior of complex dynamical systems.

\section{Conclusion}
We propose a kernel two-sample test for dynamical systems with deep connections to a new type of mixing in MMD.
The proposed method is straightforward to use, has only a few parameters, and is model-free.
In particular, we are able to estimate the speed of mixing from data in a relevant norm, which was previously not possible.
The flexibility and relevance of the proposed method are demonstrated numerically and experimentally on raw motion sensor data. 
The presented results show the potential for biomedical and engineering applications, which we plan to explore in future work.


 \subsubsection*{Acknowledgments}
 The authors thank Ingo Steinwart for valuable input and discussions.

 The work by F. Solowjow, D. Baumann, C. Fiedler, and S. Trimpe was supported in part by the Max Planck Society and the Cyber Valley Initiative as part of the former Max Planck Research Group on Intelligent Control Systems at the Max Planck Institute for Intelligent Systems, Stuttgart.

\bibliography{main}
\bibliographystyle{tmlr}

\appendix
\section{Appendix}
\subsection{LTI Systems}

There are several aspects that we kept short in the main paper and address in the following. We consider the dynamics
\begin{equation}
    X_{k+1}  = AX_k + \epsilon_k,
\end{equation}
where $\epsilon_k \stackrel{\mathrm{iid}}{\sim} \mathcal{N}(0,\Sigma)$. 
Further, assume all eigenvalues of $A \in \mathbb{R}^{d\times d}$ are located within the unit circle.

\textbf{Stationary Distribution:} The stationary distribution of an LTI system is Gaussian with expected value zero. The Gaussian distribution follows from the Gaussian noise and linear structure of the system. The expected value can be computed by leveraging that all eigenvalues of $A$ are located within the unit circle.
Obtaining the variance is more involved.
It can be expressed as the solution to the following Lyapunov equation in $Z$ \citep[Equation 7]{schluter2020event}:
\begin{equation}
    AZ A^\intercal - Z + \Sigma = 0,
    \label{eq:stationaryLTI}
\end{equation}
where $A$ is the system matrix and $\Sigma$ the covariance matrix of the process noise. 

\subsubsection{Comparison to Stationary} 
\label{sec:comparetostationary}

We investigate if we can, based on a kernel two-sample test, distinguish between time shifted samples with respect to $a^*$ and i.i.d.\ samples from the stationary distribution.

\textbf{Setup:} We create $500$ randomly generated LTI systems with a random dimensionality between $1$ and $100$. For each system we create $m = 250$ independent trajectories and sample $n = \num{20000}$ points for each trajectory. All systems are initialized in $X_0 = 0$ to avoid transient effects. The decay of dependence is quantified in the MMD-sense for one gap (\cf \abvSec 5 of main paper). We use data from the end of the trajectory to avoid numerical artifacts due to the identical initial values. 

Next, we describe how we generate the system matrices. 

\textbf{Sampling $\Sigma$:} The entries for the covariance matrix are drawn from a standard multivariate normal distribution. Since the matrix is supposed to yield a covariance matrix, we require symmetry and positive definiteness. 
Thus, we denote $\Sigma'$ as the matrix drawn from the normal distribution and define $\Sigma=0.5(\Sigma'+\Sigma'^\intercal)^2$. To control the magnitude of noise, we scale the matrix with the largest eigenvalue of $\Sigma$.  

  \begin{figure}[tb]
      \centering 
      \includegraphics[width=0.9\textwidth]{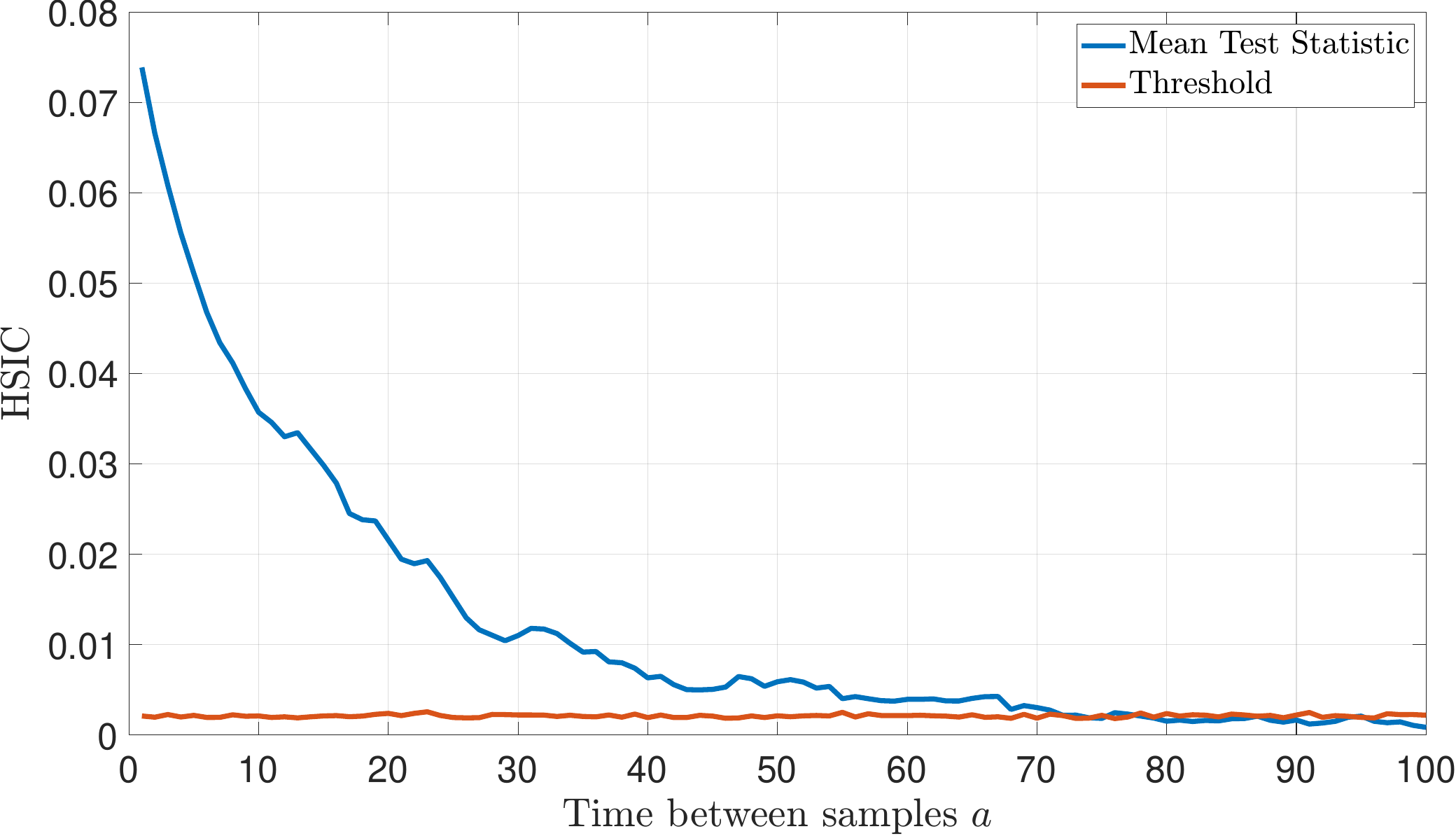}\\
      \caption{Mixing properties of the LTI system that is used to create \abvFig 1 in the main paper. On the $x$-axis, we depict the time shift $a$ between consecutive samples. The $y$-axis shows the dependence between data points with respect to the corresponding time shift. At $a^* = 75$ the test statistic is below the threshold.}
      \label{fig:LTImixingFig1}
  \end{figure}

\textbf{Sampling $A$:} The system matrix $A$ is required to have eigenvalues within the unit sphere. To achieve this, we draw the entries of $A$ from a uniform distribution and extend the system with a control input
\begin{equation}
    X_{k+1}  = AX_k + Bu_k + \epsilon_k.
\end{equation}
The control matrix $B$ is set to the identity matrix and the control input as a standard linear quadratic feedback controller $u_k = -Kx_k$. This yields the closed loop dynamics 
\begin{equation}
    X_{k+1}  = (A-BK)X_k + \epsilon_k.
\end{equation}
The feedback gain $K$ can be computed to minimize a linear quadratic cost function. By adjusting the weights of the cost function, we can indirectly adjust the eigenvalues 
of the closed loop system matrix $(A-BK)$. We set the weight matrix for the state cost $Q$ to the identity matrix and the control cost to $R = 10^7$. This makes it very expensive to apply large control inputs and magnitude of the eigenvalues of $(A-BK)$ stays close to $1$. This implies slow mixing and further, by considering $R \to \infty$, we can make this arbitrarily slow. 

\textbf{Results:} First, we use the $m=250$ trajectories to estimate the mixing speed $a^*$. We choose $a^*$ as the first time instance at which the test statistic is below the test threshold. 

For the kernel two-sample test, we draw $100$ points from the first trajectory that respect the time shift $a^*$. We also draw $100$ points directly from the stationary distribution ($\mathcal{N}(0,Z)$, \cf \eqref{eq:stationaryLTI}).

From the $500$ systems we considered overall, we only obtained $4$ false positives, which shows the high precision of our proposed test. Due to the probabilistic nature of these experiments, we could obtain systems with arbitrarily slow mixing times and, subsequently, very long $a^*$. Thus, we decided to fix a maximum $a^*$ as $a_\mathrm{max} = 200$, and ignore all systems with larger $a^*$. We obtained $81$ systems that mix too slowly, \ie $a^*> a_\mathrm{max}$.

\subsubsection{Details for \texorpdfstring{\abvFig{}}. 1 in the main paper:} 
In \abvFig \ref{fig:LTImixingFig1}, we show the mixing properties of the system that yields $a^* = 75$. We used the same setup as in \abvSec \ref{sec:comparetostationary} with some modifications. We chose $R = 10^{10}$ and divided $\Sigma$ by $10\lambda_\mathrm{max}^{2}$, where $\lambda_\mathrm{max}$ is the largest eigenvalue of $\Sigma$. Further, to be able to better visualize the samples and the stationary distributions, we fixed the dimension to two.

The randomly generated system matrices are
\begin{equation}A =
    \begin{pmatrix}
0.2345 & 0.8609  \\
0.7298 & 0.1316
\end{pmatrix},
\Sigma =     \begin{pmatrix}
0.0378 & 0.0135  \\
0.0135 & 0.0971
\end{pmatrix}.
\end{equation}

\subsection{Lorenz System}
  \begin{figure}[tb]
      \centering 
      \includegraphics[width=0.45\textwidth]{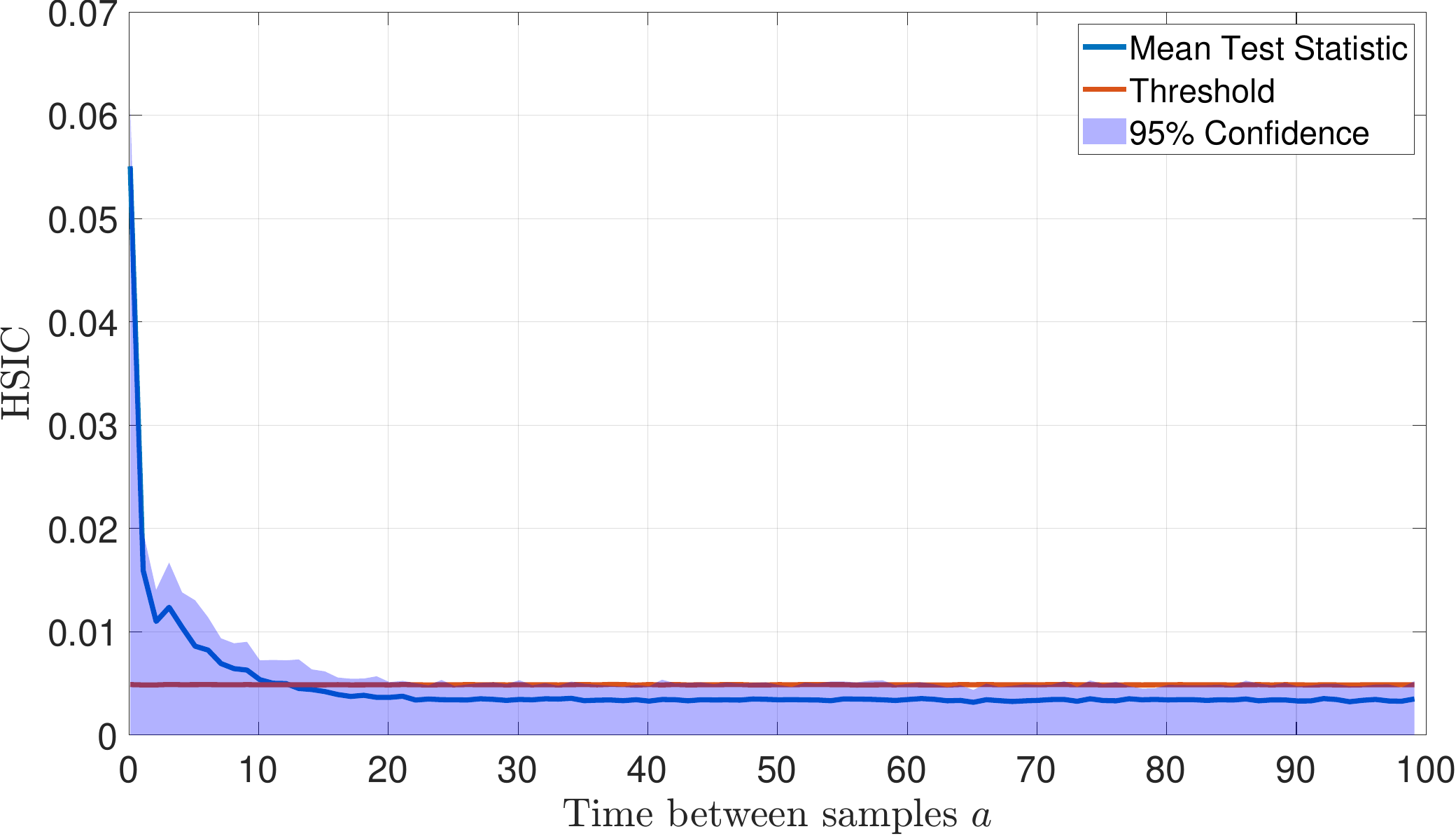}
      \includegraphics[width=0.45\textwidth]{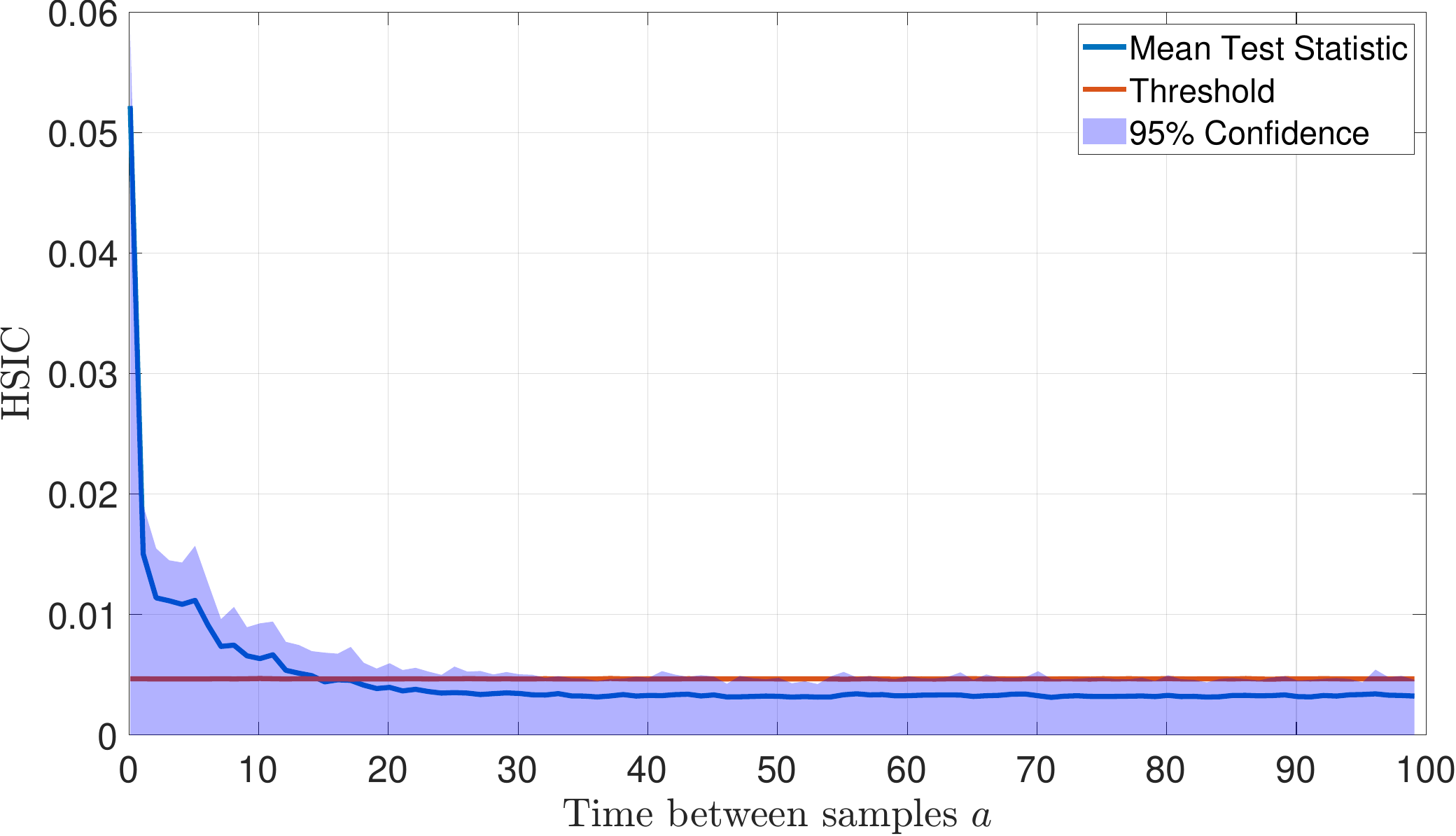}\\
      \caption{Mixing properties of the Lorenz system. Left plot with parameters as in~\eqref{eq:lorenz1}---\eqref{eq:lorenz3} and on the right, we adapted the parameter in \eqref{eq:lorenz1} to 6. On the $x$-axis, we depict the time shift $a$ between consecutive samples. The $y$-axis shows the dependence between data points with respect to the corresponding time shift.
      The initial point is randomized, and the estimation is repeated $100$ times. Depicted is the mean of the test statistic and the $95\%$ upper confidence bound. We also show the threshold $\kappa$ of the independence test. When the blue line is below the red line, it is not possible to infer statistical dependence between the data points.}
      \label{fig:lorenz}
  \end{figure}
To perform kernel two-sample testing, we slightly change the parameters in the Lorenz system by decreasing the coefficient in \eqref{eq:lorenz1} from $10$ to $6$ to obtain a second slightly different system. The mixing analysis is done for both systems.
The attractors of both systems look optically very similar. The attractor can be interpreted as the stationary probability distribution of the state in some sense.

\subsubsection{Mixing properties}
We estimate the mixing properties of the Lorenz system in the MMD-mixing sense for one time shift $a$ (\cf Sec.5). 

\textbf{Initial points:} We sample from an uniform distribution $\mathcal{U}([-0.5, 0.5]\times [-0.5, 0.5] \times [20, 21] )$ to initialize the starting point $X_0$.

\textbf{Data:} We use a standard ODE solver\footnote{ode45 in Matlab} to obtain a solution to the Lorenz system. Due to variable step sizes within the solver, we interpolate the solution to obtain samples with a fixed discretization in time. We consider the time horizon $t \in [0 , 200]$ and create $2001$ samples (with a fixed time step of $0.1$).

\textbf{Repetitions:} We create $M=100$ independent trajectories to estimate the mixing properties. The experiment is repeated $N=100$ times to investigate deviations in the decay of the dependence. 

\textbf{Estimating mixing:} To avoid numerical artifacts due to the initial points and potential transients, we consider data from the end of the trajectory. Thus, we sample at $t_\mathrm{end}= 200$ and at $t - a$ for various values of $a = 0.1, 1.1, 2.1,\ldots, 99.1$ with respect to the continuous time index $t$.

\textbf{Results:}
We depict the decay of dependence in \abvFig \ref{fig:lorenz}. After waiting for $a^* = 20$, the dependence in the data is not detectable anymore. Since the decay is not necessarily monotonic, we consider significantly higher time shifts up to $a = 99.1$. The dependence does not increase again, which indicates that the system is mostly mixing in the MMD-sense. Of course, this does not prove that the Lorenz system mixes and it remains to be shown rigorously. Nonetheless, these results are promising and provide empirical evidence. 

\subsubsection{Kernel Two-sample Test}
We try to distinguish between the Lorenz system given in \eqref{eq:lorenz1}---\eqref{eq:lorenz3} and a slightly disturbed system where we change the parameter in \eqref{eq:lorenz1} from $10$ to $6$. Based on the previous mixing analysis (\cf \abvFig \ref{fig:lorenz}) we set the time shift $a^*= 20$. This yields approximately independent samples for both systems. 

We create two trajectories of length $t_\mathrm{max}$ and pick $n$ points that respect the time shift $a^*$ as illustrated in \abvFig \ref{fig:lorenzscatter}. We repeat all experiments $100$ times. We start the sampling after $t=20$, which gives the system enough time to converge to the stationary distribution. 

\textbf{Accuracy:} We use $t_\mathrm{max} = 6000$ and pick $n=300$ points from both system. We achieve $95\%$ accuracy in detecting different systems.

\textbf{False positives:} We consider two trajectories that were generated by the classical Lorenz system (\eqref{eq:lorenz1}---\eqref{eq:lorenz3}. The initial points for both trajectories were random and different. This setup yields $2.67\%$ false positives, which is less than the $\alpha$-level of $5\%$ that we used.

Next, we investigate what happens if we violate $a^*$. We choose $n=100$ and $t_\mathrm{max} = 30$. Thus, we sample $100$ points in the time interval $t \in [20, 30]$. This clearly violates the estimated $a^*$ and indeed, we obtain $51\%$ false positives. Essentially, this makes the test useless when $a^*$ is severely violated and thus, we want to emphasize again that it is critical to estimate $a^*$. Further, through an appropriate choice of $a^*$ we inherit all the rich theoretical properties of kernel two-sample testing.


\subsection{Non-mixing System}
  \begin{figure}[tb]
      \centering 
      \includegraphics[width=0.9\textwidth]{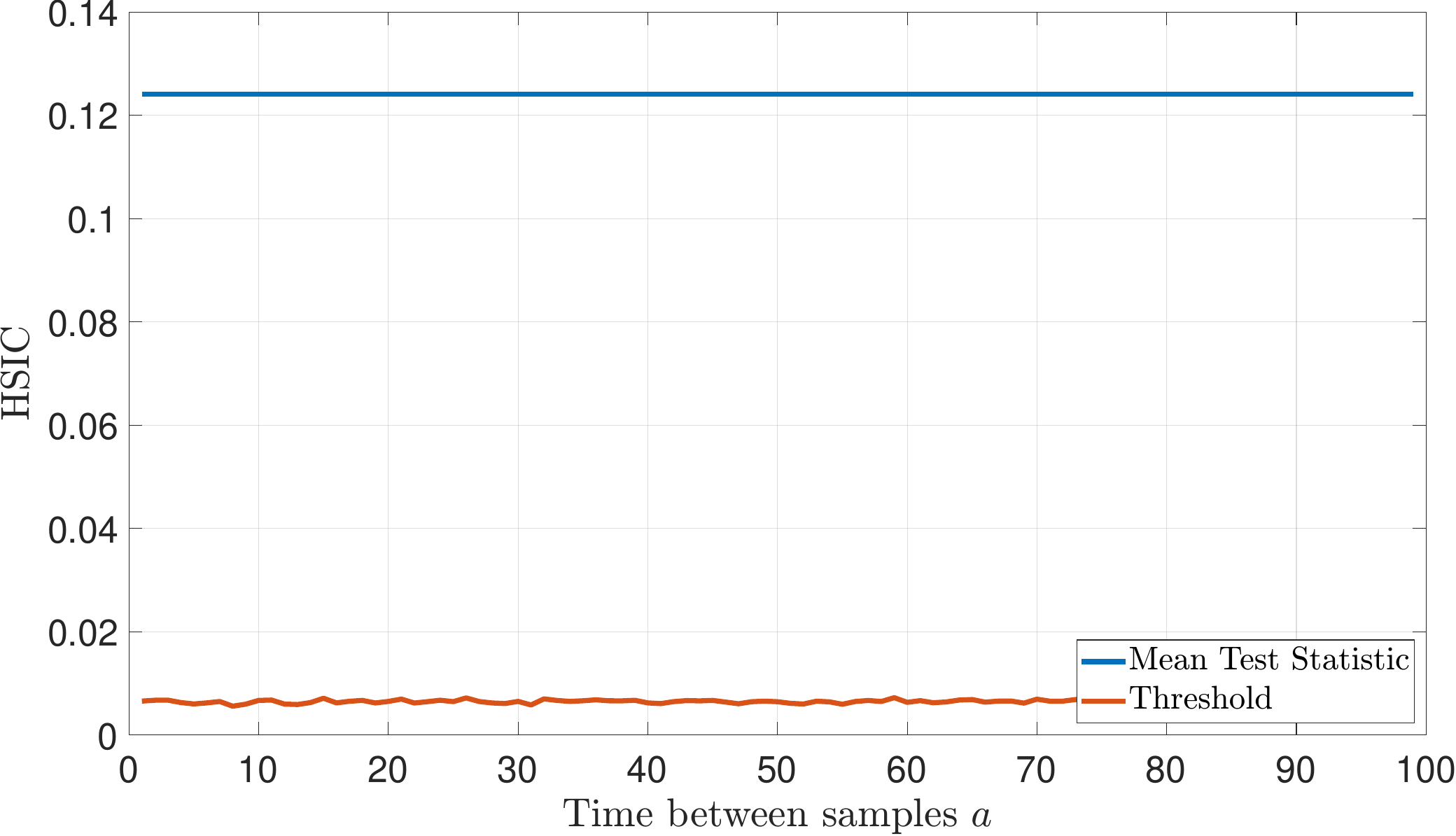}\\
      \caption{Mixing properties of a dynamical system that moves on a circle. The dependency between data points does not decrease and stays above the threshold. }
      \label{fig:nonmixing}
  \end{figure}
  We construct a system that does not mix in the MMD sense and is also not expected to mix. 
  However, the system is well known to be ergodic and stationary. 
  In particular, we consider a dynamical system that moves on a circle with a radius of one and steps of length $\frac{\pi}{10}$. We create $m=100$ randomly initialized points $\theta_0$ and iterate them for $n=100$ timesteps with the dynamics following
  \begin{equation}
      \theta_{k+1} = \theta_k +  \frac{\pi}{10},
  \end{equation}
and 
\begin{equation}
    X_{k+1} = 
    \begin{pmatrix}
\cos(\theta_k)  \\
\sin(\theta_k)
\end{pmatrix}
\end{equation}
We show the mixing properties in \abvFig \ref{fig:nonmixing}. The dependence between data points stays constant and does not decrease and we detect this.
Thus, we correctly identify systems that are not mixing in the MMD sense.

We have also tried different increments instead of $\frac{\pi}{10}$, such as $\frac{e}{10}$ and also $\frac{1}{10}$, which all resulted in the same outcome. 
\subsection{Implementations}
Since our method is leveraging results from standard kernel two-sample testing and the HSIC, we directly used existing implementations without modifying them.

\textbf{Kernel Two-sample Test Implementation:} We used the Matlab implementation: \url{http://www.gatsby.ucl.ac.uk/~gretton/mmd/mmd.htm} and the standard hyperparameters without any tuning. We used the significance level $\alpha = 0.05$ for all experiments.

\textbf{HSIC Implementation:} We used the Matlab implementation: \url{http://people.kyb.tuebingen.mpg.de/arthur/indep.htm} with standard hyperparameters and $\alpha = 0.05$ for all experiments.

\end{document}